\documentclass{article} 
\usepackage{iclr2027_conference,times}
\usepackage{booktabs}
\usepackage{multirow}

\usepackage{amsmath,amsfonts,bm}

\def\eqref#1{equation~\ref{#1}}

\def\1{\bm{1}}

\DeclareMathAlphabet{\mathsfit}{\encodingdefault}{\sfdefault}{m}{sl}
\SetMathAlphabet{\mathsfit}{bold}{\encodingdefault}{\sfdefault}{bx}{n}

\newcommand{\R}{\mathbb{R}}

\usepackage{float}
\usepackage{placeins}
\usepackage[hidelinks]{hyperref}
\usepackage{url}
\usepackage{xurl}
\usepackage{graphicx}
\usepackage{refcount}
\usepackage{amsmath}
\usepackage{amsthm}
\usepackage{enumitem}
\usepackage{xcolor}
\usepackage{tikz}
\usepackage{fontawesome5}

\definecolor{buttonbg}{HTML}{F7F7F7}
\definecolor{buttonfg}{HTML}{8C8C8C}

\newcommand{\paperbutton}[3]{%
  \href{#1}{%
    \tikz[baseline=(button.base)]{
      \node[
        fill=buttonbg,
        text=buttonfg,
        rounded corners=4pt,
        inner xsep=7pt,
        inner ysep=4pt,
        font=\small\ttfamily,
        text height=1.6ex,
        text depth=0.3ex
      ] (button) {#2\hspace{0.45em}#3};
    }%
  }%
}

\newtheorem{lemma}{Lemma}
\newtheorem{theorem}[lemma]{Theorem}

\title{World Modeling through \\ Spectral Alignment}
\newcounter{proofparent}
\newtheorem{prooflemma}{Lemma}[proofparent]

\newcommand{\lemmasfor}[1]{%
    \setcounterref{proofparent}{#1}%
    \setcounter{prooflemma}{0}%
}
\newenvironment{restatedresult}[3]{%
  \begin{trivlist}
  \item[\hskip\labelsep\bfseries #1~\ref{#2} (#3).]\itshape
}{%
  \end{trivlist}
}

\author{
Holger Molin\thanks{Equal contribution.} \\
Stanford University \\
\And
William Peng\footnotemark[1] \\
Stanford University \\
\And
Marco Bagatella \\
ETH Zürich \\
\And
Randall Balestriero \\
Brown University
}

\newcommand{\norm}[1]{\left\lVert#1\right\rVert}
\newenvironment{appendixexperiment}{%
  \par
  \noindent\begin{minipage}[t]{\linewidth}%
}{%
  \end{minipage}\par
}
\iclrfinalcopy 
\begin{document}

\maketitle
\pagestyle{plain}
\thispagestyle{plain}
\par\vspace{-1.2em}

{%
  \setlength{\parskip}{0pt}
  \centering
  \paperbutton
    {https://spectral-alignment.github.io/}
    {\faGlobe}
    {Website}
  \hspace{0.5em}
  \paperbutton
    {https://github.com/7peng/SpecWM}
    {\faGithub}
    {Code}
  \par
}

\begin{abstract}
How do we tell a world model what information to preserve? We study this question through the lens of spectral representation learning. We introduce SpecWM, a JEPA framework that explicitly specifies desired relationships between observations through a target similarity kernel and trains the encoder to reproduce those relationships in its embedding geometry. Different kernels provide different supervision, ranging from physical state information to a fully self-supervised temporal kernel requiring only the ordering of observations within trajectories. We characterize the global optima of the objective, establish conditions for linear recoverability and invariance to nuisance information, and connect temporal-kernel similarity to the behavioral successor measure under an idealized setting. Empirically, the choice of kernel strongly controls which physical quantities are recoverable from the representation. More importantly, temporal spectral alignment improves CEM planning success over LeWorldModel from 52\% to 68\% on OGBench Cube, 49\% to 77\% on OGBench Scene, and 29\% to 51\% on CALVIN. These results suggest that explicitly specifying representation geometry provides a simple way to steer what latent world models preserve for control.

\end{abstract}

\section{Introduction}
The goal of a world model is to encode sufficient information to predict the consequences of an agent's actions~\citep{HaoWorldModels}.
At the same time, complex real-world observations contain large amounts of task-irrelevant information~\citep{zhang2021learninginvariantrepresentationsreinforcement}.
Learning a compact representation therefore hinges on deciding which aspects of an observation to retain.
Joint Embedding Predictive Architectures (JEPAs;~\citealp{lecun2022path}) predict outcomes in a latent space, rather than reconstructing high-dimensional observations in pixel space, allowing learned representations to abstract away details of the input.
Due to the lack of explicit supervision or grounding in raw observation space, training JEPAs is generally non-trivial. The training objective must implicitly guide which distinctions the model preserves without requiring it to reproduce every detail of an observation.

Recent work has made substantial progress toward stable JEPA training through regularizing the embedding distribution \citep{LeJEPA, LeWorldModel}. However, avoiding collapse alone does not ensure that a representation preserves information needed for control, and the degree to which this happens is dictated implicitly by self-prediction~\citep{tang2022understandingselfpredictivelearningreinforcement}. For instance, an encoder may distinguish robot configurations while ignoring the objects the robot must manipulate. We seek a direct method to express these relationships and understand their consequences.

We approach this problem through the lens of spectral representation learning \citep{laplacianeigenmaps,bengio2004learning}, which historically focuses on extracting representations from pairwise relationships between samples. Prior work has shown that several contrastive and non-contrastive objectives generally admit spectral interpretations \citep{haochen, balestriero2022contrastivenoncontrastiveselfsupervisedlearning}. Building on this perspective, we make the pairwise notion of similarity between two observations the center of a new training objective for latent dynamics modeling.

We introduce a \textbf{spectral alignment} framework for training world models to preserve a chosen similarity structure.
Leveraging a notion of distances between observations (the \textit{teacher}), we define a target similarity kernel. We then train an encoder (the \textit{student}) to learn embeddings whose pairwise similarities match the target and an action-conditioned predictor to output future latents.
This framework can be instantiated through different teachers requiring varying degrees of supervision: when some information about the underlying system is known, encoders can be trained based on physical distances, but the natural ordering of states within a trajectory is also sufficient to provide a training signal. We name the resulting world model SpecWM.

\begin{figure}[t]
    \centering
    \includegraphics[width=1\linewidth]{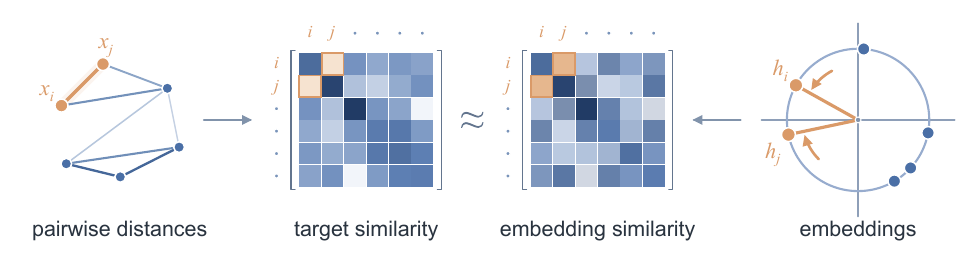}
    \caption{Spectral alignment framework. Pairwise distances (target) between samples define a kernel that an encoder is trained to reconstruct through embedding similarities.}
    \label{fig:abstract}
\end{figure}

Theoretically, we first study how the choice of teacher shapes the learned representation. When the teacher is specified for all pairs of states, we characterize the global optima as the kernel's spectral embeddings. This connection allows us to identify which quantities can be recovered from the representation via a linear probe. When teacher supervision is only available within a trajectory, we establish conditions under which matching the teacher removes observation noise from the embeddings. Finally, for the specific choice of temporal kernels, we connect the expected similarity target to the behavioral successor measure, which describes how likely the data-collection policy is to visit a given state in the future.

Practically, we instantiate this framework across visual manipulation environments spanning OGBench  Scene and Cube~\citep{park2025ogbenchbenchmarkingofflinegoalconditioned} as well as CALVIN~\citep{mees2022calvinbenchmarklanguageconditionedpolicy}.
Empirically, we confirm that kernel choice substantially impacts which physical quantities are linearly recoverable. Further, when leveraging a self-supervised temporal teacher, whose only source of supervision is the order of observations in a trajectory, the resulting world models outperform existing JEPA world models~\citep{LeWorldModel} in planning performance.

Our contributions can be summarized as follows:
\begin{enumerate}
    \item We introduce a novel world modeling objective based on spectral alignment. 
    \item We formally study this framework through the lens of spectral representation learning, with a particular focus on temporal kernels.
    \item We present a broad empirical evaluation on visual control tasks and measure the ability of different kernels to recover physical quantities and drive planning performance.
\end{enumerate}

\section{Related Work}

\textbf{World models} learn how environments respond to actions, enabling agents to plan and learn from predicted outcomes \citep{HaoWorldModels}. Generative approaches predict future observations, often by modeling dynamics in a latent space and decoding predictions into images. PlaNet uses learned dynamics to evaluate candidate action sequences, while the Dreamer series trains policies on imagined trajectories \citep{PlaNet, DreamerV4}. More broadly, generative models support interactive environments and spatial world generation \citep{Genie3, HunyuanWorld2, worldlabs2026atlas}, with world-action models such as Cosmos Policy, DreamZero, and LingBotVA extending these capabilities to action generation for robot control \citep{cosmospolicy,DreamZero,lingbotVA}. In contrast, reconstruction-free world models focus prediction on representations without requiring them to reproduce observations. DINO-WM predicts features from a pretrained visual encoder \citep{dinoWM} while JEPAs such as LeWorldModel learn the representations alongside the dynamics \citep{lecun2022path, LeWorldModel}.

\textbf{JEPA} as proposed in \cite{lecun2022path} features the central question of what anti-collapse mechanism is used to guide which information is retained and which is abstracted away. In the cases of I-JEPA \citep{I-JEPA} and V-JEPA 2.1 \citep{V-JEPA21}, the target encoder is an exponential moving average of the context encoder with a stop-grad, making collapse difficult to attain but not actually a non-optimal solution. LeJEPA \citep{LeJEPA} solves anti-collapse by enforcing the distribution of latents to be an isotropic Gaussian, which prevents trivial collapse but does not require any particular information to be retained.

\textbf{Spectral representation learning}
is intimately tied to self-supervised learning. \citet{haochen} interpret contrastive learning as spectral decomposition of an augmentation graph. \citet{balestriero2022contrastivenoncontrastiveselfsupervisedlearning} extend this perspective, showing that several objectives including InfoNCE~\citep{infonce}, VICReg~\citep{VICReg}, and Barlow Twins~\citep{zbontar2021barlowtwinsselfsupervisedlearning} recover classical spectral embedding methods where the graph is defined implicitly using the choice of augmentations. Our spectral alignment objective extends this work by making previously implicit relations between samples the central object of study. 

We cover additional families of related work in Appendix~\ref{app:extra_related}.

\section{Preliminaries}
\label{sec:preliminaries}
Let $\{x_i\}^n_{i=1}$ be a collection of inputs and let $f_\theta : \mathcal{X} \to \mathbb{R}^{d}$ be an encoder parameterized by $\theta$. We denote its representations by $h_i = f_\theta(x_i)$, which we stack as rows of $H \in \mathbb{R}^{n \times d}$. A symmetric matrix $K \in \mathbb{R}^{n \times n}$ can be used to specify the pairwise relationships between inputs. For instance, in an augmentation-based method, $K_{ij}$ may be designated as $1$ for two views produced from the same observation, and as $0$ otherwise. More generally, $K$ can be interpreted as the result of a symmetric similarity function --- i.e., a \textit{kernel} --- $k:\mathcal{X}\times\mathcal{X}\to \mathbb{R}$, which defines relationships through $K_{ij}=k(x_i,x_j)$.

\textbf{Spectral embeddings. \;}
Given a teacher kernel $K$ specifying the desired inner product between every pair of inputs in a collection, we can construct a coordinate vector for each input that realizes these pairwise relationships. Such a construction is called a \emph{spectral embedding}.
Suppose that \(K\) is symmetric and positive semidefinite, meaning that a valid spectral embedding exists. $K$ then has eigendecomposition
\begin{equation}
K = \sum _{\ell = 1}^r\lambda_\ell u_\ell u_\ell^\top =U_r\Lambda_r U_r^\top,
\label{decomp_def}
\end{equation}
where \(r=\operatorname{rank}(K)\), \(\lambda_\ell>0\) is the
\(\ell\)-th positive eigenvalue of \(K\), \(u_\ell\in\mathbb{R}^n\)
is its corresponding unit eigenvector,
\(U_r=[u_1,\ldots,u_r]\), and
\(\Lambda_r=\operatorname{diag}(\lambda_1,\ldots,\lambda_r)\). A spectral embedding of \(K\) is given by
$
H_{\mathrm{spec}}
=
U_r\Lambda_r^{1/2}
$
where the \(i\)-th row \(h_i^{\mathrm{spec}}\) provides the coordinates of sample \(i\). Exactly satisfying all the inner products of $K$ requires an embedding dimension $d\ge r.$ When \(d<r\), we get $H_d=U_d\Lambda_d^{1/2}$, for which \(H_dH_d^\top=U_d\Lambda_dU_d^\top\) is the best rank-\(d\) approximation of \(K\) in Frobenius norm. Each eigenvalue determines the relative contribution of its corresponding eigenvector to the kernel, as seen in Equation \ref{decomp_def}, making eigenvalue rescaling a natural reweighting operation. To construct a particular spectral embedding, one selects $d$ eigenvectors and assigns each a nonnegative scale $s_\ell$ yielding 
\begin{equation}
    H_{\text{spec}}
    =
    U_d\operatorname{diag}(s_1,\ldots,s_d),
    \qquad
    h_i^{\text{spec}}
    =
    \bigl(s_1u_1(i),\ldots,s_du_d(i)\bigr).
    \label{spec_def}
\end{equation}
The canonical embedding of \(K\) is recovered through \(s_\ell=\sqrt{\lambda_\ell}\); other choices reweight the spectrum.

\textbf{Linear probing.\;}
To assess which physical quantities are linearly accessible from a representation, we fit a linear map from fixed embeddings to target labels. Given targets $Y\in\mathbb{R}^{n\times p}$, such as object positions, the optimal unregularized probing error is $\min_{W\in\mathbb{R}^{d\times p}}\|Y-HW\|_F^2$.
This error is zero precisely when every column of $Y$ lies in the column space of $H$, denoted $\operatorname{col}(H)$.

\section{Method}
\label{sec:method}

\begin{figure}[t]
    \centering
    \includegraphics[width=1\linewidth]{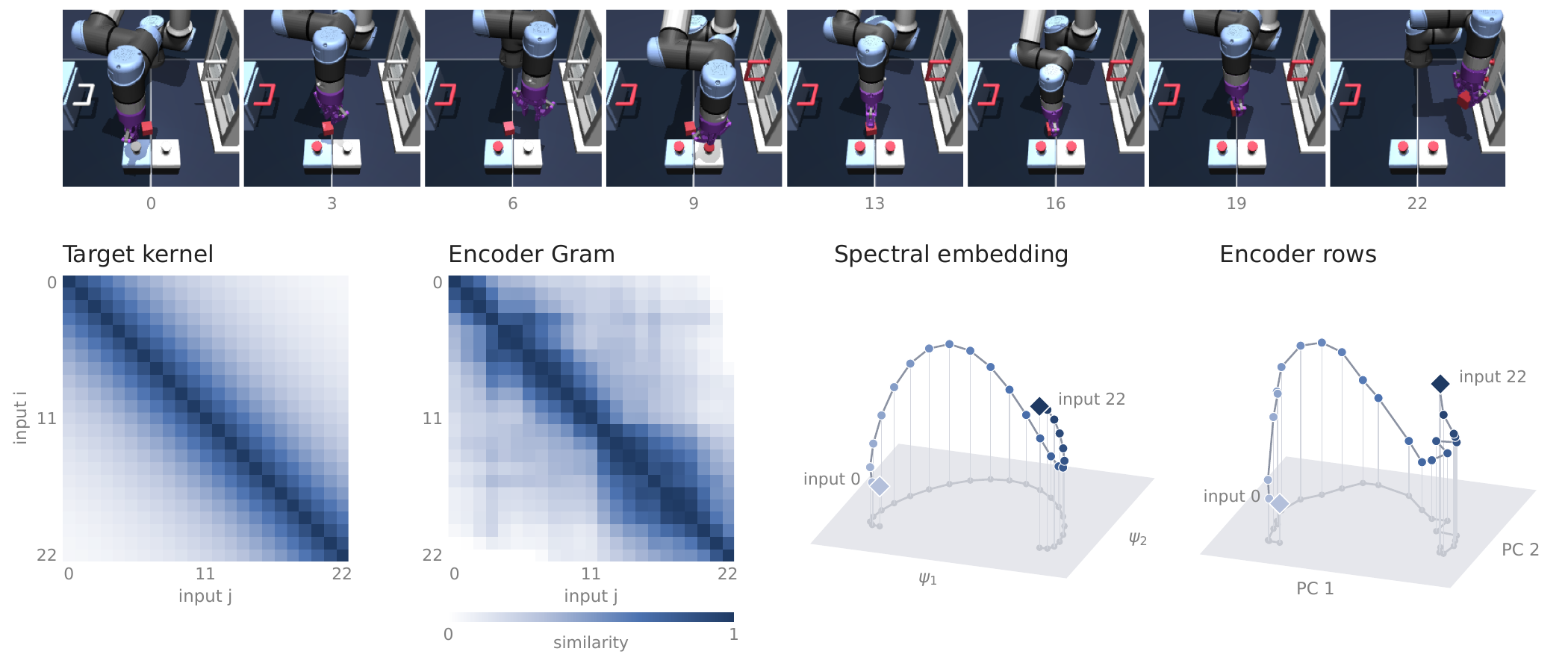}
    \caption{Spectral alignment on a single trajectory. Top: sampled observations from OGBench Scene~\citep{park2025ogbenchbenchmarkingofflinegoalconditioned}. Bottom: the temporal target kernel and encoder Gram matrix (before centering), the spectral embedding of $K$ (top three eigenvectors), and the encoder rows $h_i$ (top three principal components), colored by frame index. Matching pairwise similarities encourages the encoder to recover the target's geometry.}
    \label{fig:pipeline}
\end{figure}

We will state the general framework for arbitrary kernels, and later specialize it to practical kernels.

\textbf{Building a target.\;}
We start from a distance $d$ between pairs of observations (which could be derived from ground-truth measurements, if available, or simply by counting the number of observations that separate two observations in the data). 
Considering a sequence of observations $\{x_i\}_{i=1}^n$, we leverage the distance $d$ to build a target \emph{kernel} $K_{ij}$. In practice, we adopt a Laplacian kernel to convert distances into similarities:
\begin{equation}
K_{ij} = \exp\big(-d(x_i, x_j)/\sigma\big),
\end{equation}

with values in $(0, 1]$. Its slope at $d = 0$ is $-1/\sigma$, so that observations that are close according to $d$ remain distinguishable, while pairs more than a few $\sigma$ apart have similarity near $0$. 
The bandwidth $\sigma$ therefore sets the range of distances the target resolves; we set it to the median teacher distance over the pairs the kernel is defined on.
We then double center $K$ to form $\tilde K$:
\begin{equation}
    \Tilde{K}_{ij} \;=\; K_{ij} \;-\; \frac{1}{n}\sum_{k} K_{ik} \;-\; \frac{1}{n}\sum_{k} K_{kj} \;+\; \frac{1}{n^2}\sum_{k,l} K_{kl}.
\end{equation}
Without centering, the kernel would be dominated by a positive baseline, since kernel values concentrate above zero, and regressing this target would favor matching that shared baseline rather than the variation around it.

Still, if a few eigenvalues dominate the kernel, the encoder can achieve a small squared error by matching those directions while largely ignoring the others. We therefore \textit{temper} the kernel by reshaping its spectrum before it is used as a target. We write the eigendecomposition of the centered kernel as $\Tilde K = U \Lambda U^T$ with eigenvalues in decreasing order, and define $\Lambda^\alpha$ by raising the leading $m$ positive eigenvalues to the power $\alpha \in [0,1]$ and setting all remaining entries to zero. The target is then
\begin{equation}
    K^\alpha = n\frac{U\Lambda^\alpha U^T}{\text{tr}\Lambda^\alpha}.
\end{equation}
At $\alpha=1$, each eigenvector occupies the target's variance according to its eigenvalue fraction. At $\alpha =0$, each eigenvector's share is uniform.
In practice, we will use $\alpha = 1/2$, thus using the square root of the kernel matrix as target, and truncate the spectrum at $m=128$.

\textbf{Training the encoder. \;}
We implement the encoder $f_\theta$ as a deep neural network that maps each observation $x_i$ to an embedding $h_i \in \mathbb{R}^d$.
It will act as a \textit{student}, and regress the kernel designed from the teacher distance. 
We first rescale each embedding to the length the target prescribes
\begin{equation}
\label{eq:rescaling}
    \bar{h}_i = \sqrt{K^\alpha_{ii}} \frac{h_i}{\norm{h_i}},
\end{equation}

and then compute the student's Gram matrix --- the symmetric matrix of inner products of its rows ---  $S = \bar{H}\bar{H}^T$.
Similarly to the kernel, we double center the Gram matrix as $\Tilde S$, and regress the target through a spectral loss:
\begin{equation}
    \mathcal{L}_\text{spec} = \norm{\Tilde S- K^\alpha}^2_F.
\end{equation}

\textbf{Prediction anchor.\;}
While $\mathcal{L}_\mathrm{spec}$ is sufficient for training an encoder, planning in latent space requires learning a predictor, i.e., a neural network $g_\phi: \mathbb{R}^d \times \mathcal{A} \to \mathbb{R}^d$ parameterized by $\phi$ and trained to predict the latent outcome of applying some action from an action space $\mathcal{A}$ after receiving a given observation. For stability, we apply a parameter-free RMS normalization to the encoder output, so that every embedding has norm $\sqrt{d}$. While the spectral loss is invariant to this step, it fixes the scale of the embeddings seen by the predictor and the planner.
We train a predictor by a simple regression objective in latent space, which is also backpropagated through the encoder, such that the total loss is
\begin{equation}
    \mathcal{L} = \mathcal{L}_\text{spec}
    - \lambda \sum_i \big(\cos(G,H')\big)_i,
    \label{eq:total_loss}
\end{equation}
where $G$ stacks predictions $g_\phi(f_\theta(x_i), a_i)$, $H'$ stacks representations of next observations $f_\theta(x_i')$ for each training transition $(x_i, a_i, x_i')$, and $\cos(\cdot)$ extracts the vector of row-wise cosine similarities from the two matrices.

\textbf{Planning. \;}
Once a spectral world model is trained through the objective in Equation~\ref{eq:total_loss}, it can be directly deployed in the standard planning pipeline for JEPA world models~\citep{LeWorldModel}.
In practice, we use the Cross-Entropy Method (CEM) \citep{rubinstein1999cross} to optimize action sequences from the current observation $x_0$:
candidate action sequences are sampled from a Gaussian distribution, and the world model predicts the representation of observations $T$ steps ahead for each. Candidates are then scored according to similarity to the embedding of a goal observation $g$:
\begin{equation}
\cos\left(g_\phi^T(f_\theta(x_0), a_{0}^{T-1}), f_\theta(g)\right),
\label{eq:planning}
\end{equation}
where $g_\phi^T$ denotes recursive application of the predictor over $T$ steps, and $a_0^{T-1}$ is a sequence of actions.
The action distribution is refit on the best candidates for a fixed number of iterations. We then execute a prefix of the selected action sequence before replanning from the new observation. We detail hyperparameters in Appendix \ref{reproducibility}.

\paragraph{Kernel design.}
\label{sec:kernel_design}
Often, robotics data is labeled by informative metadata such as proprioception or depth masks. In such cases, that metadata can be used to construct a \textit{supervised} kernel. We evaluate three such kernels: state, depth, and proprioception. For each, we use the Euclidean distance between the corresponding feature vectors as the teacher distance.
The state kernel uses the simulator state $s_i$ underlying each observation $x_i$ as its feature vector. Although this information is generally unavailable in real-world settings, it lets us evaluate a teacher with access to the full physical state.
The proprioception and depth kernels use the robot's proprioceptive state and the flattened depth mask as feature vectors, respectively.

When metadata is not available, the ordering of observations in trajectories itself can form an \textit{unsupervised} kernel.
We use a temporal kernel, which relies on absolute difference between indices for observations ordered in a trajectory: $d(x_i, x_j) = |i - j|$.
We also evaluate a quasimetric kernel as a close relative to the temporal kernel.
This kernel relies on a distance $d_q$ learned through quasimetric reinforcement learning~\citep{wang2023optimalgoalreachingreinforcementlearning,iqepaper}, which estimates temporal distances between observations according to the optimal goal-reaching policy.
While this potentially introduces estimation errors, it labels state pairs by the minimum number of steps between states, instead of expressing the temporal separations observed in the data, and can thus compensate for poor data quality.
Since the student's Gram matrix is symmetric, we symmetrize the learned quasimetric $d(x_i,x_j)=d_q(x_i \to x_j) + d_q(x_j \to x_i)$.

While supervised kernels can describe relationships between all pairs of inputs, i.e., are \textit{fully specified}, the unsupervised kernels we use only express relationships for pairs of inputs in the same trajectory. In this
\textit{underspecified} setting, cross-trajectory similarities are excluded from the training objective. We also use a different normalization of the spectral loss: we omit tempering and the target-diagonal rescaling in Equation~\ref{eq:rescaling}. Each clip's squared error is normalized by the mean squared magnitude of its centered target before averaging across clips. While representations are fixed up to an orthogonal transformation in the fully specified case, underspecification allows the model to form its own abstractions~\citep{haochen2023theoreticalstudyinductivebiases}. 

\section{Theoretical Analysis} 
\label{theory}
In this section, we build upon previous analyses~\citep{bengio2004learning, balestriero2022contrastivenoncontrastiveselfsupervisedlearning,blier2021learningsuccessorstatesgoaldependent} and provide formal statements regarding the spectral alignment objective and some intuition on their practical relevance. Full proofs of each theorem are provided in Appendix \ref{proofs}. \looseness -1

\subsection{The spectral loss has a known optimum}
A foundational property of our objective is that in the fully specified setting, an embedding table that minimizes $\mathcal{L}_\text{spec}$ recovers the row-normalized spectral embeddings of $K^\alpha$~\citep{bengio2004learning}.

\begin{theorem}[Spectral recovery]
\label{thm:recovery}
Suppose $m \leq d$ and $K^\alpha_{ii}>0$ for all $i$. Then $\min_H \mathcal{L}_\text{spec}(H) = 0$, which is attained
exactly at
\begin{equation}
    \label{eq:minimizers}
    H^\star = \operatorname{diag}(c)\, H_\text{spec}\, Q,
\end{equation}
with $c \in \mathbb{R}^n_{>0}$, $Q \in \mathbb{R}^{d\times d}$ an orthogonal matrix, and $H_\text{spec}$ the spectral embedding of $K^\alpha$ with square-root eigenvalue scaling.
\end{theorem}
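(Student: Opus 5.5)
The plan is to reduce the problem to a statement about Gram matrices and then invoke the uniqueness of factorizations up to orthogonal transformations. Write $C = I - \frac1n \1\1^\top$ for the centering matrix, so that $\tilde S = C\bar H\bar H^\top C$. The loss is nonnegative, so I will show that it can reach $0$ and then characterize its zero set.

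\textbf{Existence of a zero.} Note first that $K^\alpha = n\,U\Lambda^\alpha U^\top/\operatorname{tr}\Lambda^\alpha$ is built from eigenvectors of the double-centered $\tilde K$ with positive eigenvalues. These eigenvectors are orthogonal to $\1$, since $\tilde K\1 = 0$. Hence $CK^\alpha C = K^\alpha$, and $K^\alpha$ is PSD of rank at most $m \le d$. Let $H_{\mathrm{spec}} = U_m (n\Lambda_m^\alpha/\operatorname{tr}\Lambda^\alpha)^{1/2}$, padded with zero columns up to width $d$. Then $H_{\mathrm{spec}}H_{\mathrm{spec}}^\top = K^\alpha$, and its $i$-th row has norm $\sqrt{K^\alpha_{ii}} > 0$. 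Take $H = H_{\mathrm{spec}}$. The rescaling of Equation~\ref{eq:rescaling} leaves every row unchanged, so $\bar H = H_{\mathrm{spec}}$ and $\tilde S = C K^\alpha C = K^\alpha$. The loss is therefore zero.

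\textbf{Characterizing zeros.} The rescaling map $H \mapsto \bar H$ is invariant to positive row scalings. It is also equivariant under right multiplication by orthogonal $Q$, because $Q$ preserves row norms. This already shows that every $\operatorname{diag}(c)H_{\mathrm{spec}}Q$ is a minimizer. For the converse, suppose $\mathcal L_{\mathrm{spec}}(H)=0$, i.e. $C\bar H\bar H^\top C = K^\alpha$. The key obstacle is the centering, which discards the component of $\bar H$ along $\1$. Centering alone only yields $C\bar H = H_{\mathrm{spec}}Q$, by the standard fact that $AA^\top = BB^\top$ implies $A = BQ$ for some orthogonal $Q$ (via polar decomposition or SVD). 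To go further I would exploit the row-norm constraint. Write $\bar H = H_{\mathrm{spec}}Q + \1 v^\top$ for some $v\in\R^d$. Its rows $\bar h_i = q_i + v$ must satisfy $\|q_i+v\|^2 = \|q_i\|^2 = K^\alpha_{ii}$. This gives $2\langle q_i, v\rangle + \|v\|^2 = 0$ for all $i$. Summing over $i$ and using $\sum_i q_i = 0$ (the columns of $H_{\mathrm{spec}}$ are orthogonal to $\1$) yields $n\|v\|^2 = 0$, so $v=0$. Hence $\bar H = H_{\mathrm{spec}}Q$. Since $H = \operatorname{diag}(\|h_i\|/\sqrt{K^\alpha_{ii}})\,\bar H$, the minimizer has the form claimed in \eqref{eq:minimizers} with $c_i>0$.

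The step I expect to be most delicate is exactly this removal of the $\1 v^\top$ ambiguity. It is the only place where the specific normalization to $\sqrt{K^\alpha_{ii}}$ and the centering of $K^\alpha$ interact. I would also check the implicit requirement $h_i \ne 0$, which is needed for $\bar h_i$ to be defined.
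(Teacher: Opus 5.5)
Your proposal is correct and is essentially the paper's proof, with the two steps in the opposite order. The paper first shows $\bar H^\top\mathbf{1}=0$ from the identity $\operatorname{tr}(CSC)=\operatorname{tr}S-\tfrac1n\mathbf{1}^\top S\mathbf{1}$ together with the pinned diagonal, and only then applies the Gram-factorization fact; you factor $C\bar H$ first and eliminate the mean vector $v$ by summing the row-norm constraints, which is the same trace computation written row by row.
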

This result is useful as the spectral embeddings of the kernel can be calculated through eigendecomposition as described in Section~\ref{sec:preliminaries}. We can thus evaluate the probing qualities of spectral embeddings derived from arbitrary kernels without ever having to train a model. 

\subsection{Downstream probing guarantees} 

The ability to probe a given quantity from spectral embeddings depends on its relationship with the chosen kernel.
Following prior work \citep{balestriero2022contrastivenoncontrastiveselfsupervisedlearning}, we show that a kernel constructed from label relationships can make the labels themselves linearly recoverable, without directly regressing encoder outputs onto them.
Intuitively, this suggests that the choice of a certain kernel encourages embeddings to extract the information used to construct it; this motivates the supervised kernels evaluated in Section~\ref{kernel_ablations}.

\begin{theorem}[Linear recoverability]
\label{thm:linear-recovery}
Let $Y\in\mathbb R^{n\times p}$ satisfy $\mathbf1^\top Y=0$, row norms $\|y_i\|=1$, and $\operatorname{rank}(Y)\le d$. For $K=YY^\top$, $\alpha=1$, and all positive spectral directions retained, every global minimizer $H^\star$ of $\mathcal L_{\text{spec}}$ subject to $\|h_i\|=\sqrt d$ satisfies
\begin{equation}
    \label{eq:zero-linear-probe}
    \min_{W\in\mathbb R^{d\times p}}\|Y-H^\star W\|_F^2=0.
\end{equation}
\end{theorem}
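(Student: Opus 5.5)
The plan is to reduce the statement to Theorem~\ref{thm:recovery} and then check that the column space of the resulting minimizers contains $\operatorname{col}(Y)$.

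\emph{Step 1: the target kernel.} With $K=YY^\top$ and $\mathbf 1^\top Y=0$, the centering matrix $J=I-\tfrac1n\mathbf 1\mathbf 1^\top$ satisfies $JY=Y$. Hence $\tilde K=JKJ=YY^\top$, so centering changes nothing. The matrix $YY^\top$ is positive semidefinite with rank $r=\operatorname{rank}(Y)\le d$. Its positive eigenvectors span exactly $\operatorname{col}(Y)$. With $\alpha=1$ and all positive directions retained ($m=r$), the tempered target is
\[
K^1=\frac{n\,YY^\top}{\operatorname{tr}(YY^\top)}.
\]
Since $\|y_i\|=1$, we have $\operatorname{tr}(YY^\top)=n$, so $K^1=YY^\top$ and $K^1_{ii}=1>0$. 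The hypotheses of Theorem~\ref{thm:recovery} therefore hold with $m=r\le d$.

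\emph{Step 2: invoke Theorem~\ref{thm:recovery}.} The unconstrained minimum of $\mathcal L_{\text{spec}}$ is $0$ and is attained exactly at $H^\star=\operatorname{diag}(c)H_{\text{spec}}Q$, where $H_{\text{spec}}=U_r\Lambda_r^{1/2}$ is padded with zero columns to width $d$. Adding the constraint $\|h_i\|=\sqrt d$ leaves the minimum value $0$ attainable: the rescaling in Equation~\ref{eq:rescaling} makes the loss invariant to positive row scalings, so we may take
\[
c_i=\frac{\sqrt d}{\|h^{\text{spec}}_i\|}.
\]
Here $\|h_i^{\text{spec}}\|^2=K^1_{ii}=1$, so every $c_i$ is well defined. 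Consequently every constrained global minimizer attains zero loss and is among the minimizers of Theorem~\ref{thm:recovery}. In fact $c_i=\sqrt d$ for all $i$, so
\[
H^\star=\sqrt d\,H_{\text{spec}}Q .
\]

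\emph{Step 3: probing.} Because $Q$ is invertible and $\sqrt d>0$,
\[
\operatorname{col}(H^\star)=\operatorname{col}(U_r\Lambda_r^{1/2})=\operatorname{col}(U_r)=\operatorname{col}(YY^\top)=\operatorname{col}(Y).
\]
By the linear-probing criterion in Section~\ref{sec:preliminaries}, every column of $Y$ lies in $\operatorname{col}(H^\star)$, which gives Equation~\ref{eq:zero-linear-probe}. An explicit probe is
\[
W=(H^\star)^+Y=\frac{1}{\sqrt d}\,Q^\top \Lambda_r^{-1/2}U_r^\top Y,
\]
padded appropriately.

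\emph{Main obstacle.} The delicate step is Step 2. One must confirm that adding the norm constraint does not create minimizers outside the family of Theorem~\ref{thm:recovery}, and that the diagonal rescaling $\operatorname{diag}(c)$ cannot alter the column space. The first point follows from the minimum value remaining $0$. For the second, the unit-norm rows of $Y$ force $K^1_{ii}=1$, so $c$ is a constant multiple of $\mathbf 1$. In general a nonconstant diagonal scaling would change the column space, and the assumption $\|y_i\|=1$ is precisely what rules this out.
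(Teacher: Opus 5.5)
Your proof is correct and follows essentially the same route as the paper: you check that centering and tempering leave $K^\alpha=YY^\top$ with unit diagonal, reduce to Theorem~\ref{thm:recovery}, use the row-norm constraint to force $c_i=\sqrt d$, and then exhibit a probe. Your pseudoinverse probe $W=(H^\star)^+Y$ simplifies, via $Y=U_r\Lambda_r^{1/2}V^\top$, to exactly the paper's $W^\star=\tfrac{1}{\sqrt d}Q^\top E_m^\top V^\top$. Your statement $\operatorname{col}(H^\star)=\operatorname{col}(Y)$ is the same fact expressed without coordinates.
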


\subsection{Task-irrelevant information}
\label{sec:kni}

A central goal of a JEPA is selective invariance--- dropping irrelevant parts of an observation~\citep{lecun2022path}. We define task-irrelevant information as any aspect of an observation the teacher kernel is invariant to. For example, a cost-to-go kernel for robotic manipulation may ignore the exact patterns of leaves rustling in the background. We ask whether matching such a teacher also makes the encoder ignore these details. Formally, consider a sequence of observations
\begin{equation*}
\{x_i\}^n_{i=1}=\{(s_i,n_i)\}^n_{i=1},
\end{equation*}
where $s_i$ is the task-relevant component and $n_i$ is the task-irrelevant component. Let $\nu(\cdot\mid s_i)$ denote the distribution of task-irrelevant values at state $s_i$, and let $Z$ collect the task-relevant trajectory. We establish invariance under a coverage assumption: nuisance values that are individually possible at two states must also be possible together in a trajectory visiting both states. This excludes a background color that varies across clips but is always identical between frames of a clip. 

\textbf{Coverage. (A1)}
\label{assump:coverage}
Fix a task-relevant trajectory $Z$ and two distinct observation indices $i$ and $j$. Draw $n \sim \nu(\cdot\mid s_i)$ and $n' \sim \nu(\cdot\mid s_j)$ independently. For almost every $Z$, all $i\neq j$, and every measurable set $E$ of nuisance pairs, assume
\begin{equation*}
\Pr\bigl((n,n')\in E\mid s_i,s_j\bigr)>0
\quad\Longrightarrow\quad
\Pr\bigl((n_i,n_j)\in E\mid Z\bigr)>0.
\end{equation*}
Conditioning on the trajectory may change which nuisance pairs are likely, but cannot make an otherwise possible set of pairs impossible. This assumption covers independently sampled sensor noise and allows correlated disturbances, such as stochastic foliage motion or rainfall, when all otherwise possible combinations remain possible. It excludes persistent trajectory-level attributes, such as a background identity or lighting that varies across trajectories but is fixed within each one.

\begin{theorem}[Abstraction of task-irrelevant information]
\label{thm:kni-budget}
Under the coverage assumption, fixed-norm encoder outputs, and the target-diagonal rescaling in Equation~\ref{eq:rescaling}, if each trajectory's target matrix $K^\alpha$ has strictly positive diagonal entries almost surely and
$\mathbb E\,[\mathcal L_{\mathrm{spec}}]=0$, then for almost every clip and every frame $i$,
\begin{equation}
    f_\theta((s_i,n))=f_\theta((s_i,n'))
\end{equation}
for almost every pair $(n,n')$ drawn independently from $\nu(\cdot\mid s_i)$.
\end{theorem}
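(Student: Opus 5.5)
The plan is to turn $\mathbb E[\mathcal L_{\mathrm{spec}}]=0$ into an almost-sure identity between the student's rescaled Gram matrix and the target, and then use coverage to move that identity from the nuisance values actually observed to independent draws. Since $\mathcal L_{\mathrm{spec}}\ge 0$, zero expectation gives $\mathcal L_{\mathrm{spec}}=0$ almost surely. Hence for almost every clip $\tilde S=K^\alpha$. The target depends only on the task-relevant trajectory $Z$, since the teacher is invariant to nuisance.

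The first step is to undo the double centering. Centering is the map $S\mapsto JSJ$ with $J=I-\tfrac1n\mathbf 1\mathbf 1^\top$. It only determines $S$ up to $S_{ij}\mapsto S_{ij}+a_i+a_j$. I will pin the diagonal using the rescaling in Equation~\ref{eq:rescaling}: $S_{ii}=K^\alpha_{ii}$, which is fixed by $Z$. Combined with fixed-norm outputs, this shows the normalized student vectors $u_i=h_i/\|h_i\|$ satisfy
\begin{equation*}
\|u_i-u_j\|^2=\text{a function } D_{ij}(Z) \text{ of } Z \text{ alone.}
\end{equation*}
The squared distances of $\bar h$ are invariant under double centering and equal $K^\alpha_{ii}+K^\alpha_{jj}-2K^\alpha_{ij}$ up to the known diagonal scalings. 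I will make this precise by writing
\begin{equation*}
\|\bar h_i-\bar h_j\|^2=\tilde S_{ii}+\tilde S_{jj}-2\tilde S_{ij},
\end{equation*}
and then expanding $\bar h_i=\sqrt{K^\alpha_{ii}}\,u_i$ to solve for $u_i^\top u_j$ in terms of $Z$. This gives $u_i^\top u_j=g_{ij}(Z)$ almost surely, with $g_{ij}$ depending on $Z$ only.

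The second step uses coverage, and I expect it to be the main obstacle, mostly for measure-theoretic care. Fix $Z$ and $i\ne j$. Let
\begin{equation*}
\phi(n,n')=u(s_i,n)^\top u(s_j,n'),
\end{equation*}
and let $E=\{(n,n'):\phi(n,n')\neq g_{ij}(Z)\}$. Step one gives $\Pr((n_i,n_j)\in E\mid Z)=0$. The contrapositive of (A1) then gives $\Pr((n,n')\in E)=0$ under the independent product measure. So $\phi(n,n')=g_{ij}(Z)$ for almost every independent pair.

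Now I vary the nuisance at frame $i$ only. By Fubini, for almost every $n'$ and almost every $n_1,n_2\sim\nu(\cdot\mid s_i)$,
\begin{equation*}
u(s_i,n_1)^\top u(s_j,n')=u(s_i,n_2)^\top u(s_j,n').
\end{equation*}
I also need the self-term, with the two nuisance values both at frame $i$. Unit norm gives $u(s_i,n_1)^\top u(s_i,n_1)=1$, but it says nothing directly about $u(s_i,n_1)^\top u(s_i,n_2)$. I will route through a third frame. The differences $u(s_i,n_1)-u(s_i,n_2)$ are orthogonal to $u(s_j,n')$ for almost every $n'$ and every $j\neq i$. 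That alone is insufficient if those vectors do not span. So instead I will use the distance form directly: $\|u(s_i,n)-u(s_j,n')\|^2=D_{ij}(Z)$.

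If $D_{ij}=0$ for some $j$, meaning the target forces two frames to coincide, then $u(s_i,n)=u(s_j,n')$ for almost every $(n,n')$. Fubini then makes $u(s_i,\cdot)$ almost surely constant. In the general case, I will try to show that the conditional variance of $u(s_i,n)$ over $n$ vanishes. From $u(s_i,n)^\top v=c$ for almost all $v$ in the support of the frame-$j$ vectors, it follows that $u(s_i,n)$ is determined on the span of those $v$. The component orthogonal to that span is the remaining gap. To close it, I plan to fall back on the standard fact for zero-loss Gram matching: the rescaled vectors of a clip realize $K^\alpha$ exactly. Applying the exchange argument to the whole clip's Gram matrix, by replacing $n_i$ with an independent copy while keeping the others, shows the clip with the swapped nuisance also attains zero loss. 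Distinctness of frames $i\neq j$ is what (A1) protects. Matching the Gram matrix with one row replaced then forces the pairwise distance $\|u(s_i,n_1)-u(s_i,n_2)\|$ to be consistent with the zero target self-distance. This gives $f_\theta((s_i,n_1))=f_\theta((s_i,n_2))$ after restoring the fixed norm.
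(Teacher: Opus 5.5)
\paragraph{Where the argument breaks.} Your first step is fine. The identity $S_{ii}+S_{jj}-2S_{ij}=\tilde S_{ii}+\tilde S_{jj}-2\tilde S_{ij}$, together with the pinned diagonal, is a valid alternative to the paper's trace argument for $S=K^\alpha$. Your use of (A1) to pass the pairwise identities to independent nuisance draws is also the right move. The gap is your final step.

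Write $b_j(n)=\sqrt{K^\alpha_{jj}/d}\,f_\theta((s_j,n))$. From the two Gram identities (the tuple containing $b_i(n_1)$ and the tuple containing $b_i(n_2)$), you only extract that $b_i(n_1)$ and $b_i(n_2)$ have equal norms and equal inner products with every other row. So their difference is orthogonal to the span of the other rows, which is exactly the obstruction you had already identified. This does not force equality, because $b_i(n_2)$ could be the mirror image of $b_i(n_1)$ through that span. There is also no ``zero target self-distance'' to appeal to: $n_1$ and $n_2$ never occur together in one clip, so no entry of $K^\alpha$ constrains $b_i(n_1)^\top b_i(n_2)$.

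Some property of $K^\alpha$ beyond pairwise matching has to enter. For an uncentered target such as $K=I_2$ with $d=3$, map frame $1$ to $e_1$ and frame $2$ to $e_2$ or $e_3$ depending on the nuisance. This matches $K$ exactly for every nuisance value, yet the embedding of frame $2$ is not invariant.

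\paragraph{The missing ingredient.} The target is double centered, $K^\alpha\mathbf 1=0$, and you never use this after step one. Draw all nuisances independently, $\hat n_j\sim\nu(\cdot\mid s_j)$. Then the $b_j(\hat n_j)$ have Gram matrix $K^\alpha$ almost surely, so $\|\sum_j b_j(\hat n_j)\|^2=\mathbf 1^\top K^\alpha\mathbf 1=0$ and hence $b_i(\hat n_i)=-\sum_{j\neq i}b_j(\hat n_j)$. Redrawing only $\hat n_i$ gives an identically distributed tuple satisfying the same identity. Subtracting the two identities gives $b_i(\hat n_i)=b_i(\hat n_i')$ almost surely. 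Equivalently, the difference lies in the span of the other rows and is orthogonal to it, hence is zero.

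\paragraph{A secondary point.} The exchange must use independent draws for all frames, as the paper does. You should not keep the observed trajectory nuisances for $j\neq i$. (A1) only says that null sets of the trajectory law are null under the product law $\nu(\cdot\mid s_i)\otimes\nu(\cdot\mid s_j)$. It therefore gives no control over mixed pairs $(\hat n_i,n_j)$.
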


At zero expected loss, varying the task-irrelevant component while holding the task-relevant component fixed leaves the representation unchanged.

\subsection{Temporal kernels and the successor measure}
\label{sec:successor}

Having analyzed what information spectral alignment keeps or discards for arbitrary kernels, we now focus on the temporal kernel, and show that the representations it retrieves can be desirable for planning with a world model. This analysis replaces the free embedding tables above with a state-conditioned encoder $h:\mathcal{S}\to\{z\in\mathbb R^d:\norm{z}=1\}$. We will show that $\mathcal{L}_{\mathrm{spec}}$ corresponds to a regression problem, whose target
can be related to the (behavioral) successor measure~\citep{blier2021learningsuccessorstatesgoaldependent}: the discounted cumulative probability of the data generation process to visit a given state in the future.
This connects the embedding similarity used in planning to how readily states lead to one another under the behavior represented in the data.

We assume that each trajectory in the dataset is generated by a stationary Markov policy $\pi_\beta$ in a Markov decision process with a finite state space $\mathcal{S}$, and that the induced Markov chain $P$ is irreducible and aperiodic, admitting a stationary distribution $\mu$, from which the initial state $x_0$ is sampled.
We can introduce a discount factor derived from the Laplacian kernel and its temperature: $\gamma:=e^{-1/\sigma} \in (0,1)$, and then define the behavioral successor measure as
\begin{equation}
    M^\beta(x,x') \;=\; \sum_{t=0}^{\infty}\gamma^{t}\,
    \Pr\!\left(x_t=x'\mid x_0=x,\pi_\beta\right).
\end{equation}
The training data is generated by drawing trajectories $(x_0,\dots,x_{T-1})$ from the induced Markov chain, and indices $(i,j)$ uniformly from $\{0,\dots,T-1\}^2$.
For simplicity, we finally assume that the student and the teacher are not centered and that the latter does not undergo tempering: $\alpha=1$, and no truncation is applied. Let $s(x,x')=h(x)^\top h(x')$ denote embedding similarity. Under these assumptions, we consider the population-level objective:
\begin{equation}
  \mathcal L_{\mathrm{spec}}^{\mathrm{pop}}(h) \;=\; \mathbb E_{\tau, i,j} \Big[
  \big(s(x_i,x_j) - \gamma^{|i-j|}\big)^2\Big],
\end{equation}
with $\tau:=(x_0, \dots, x_{T-1}) \sim \mathcal{D}$ and $(i,j)\sim\mathcal U(\{0,\dots,T-1\}^2)$.
We also define the expected teacher similarity of the state pair as
\begin{equation}
    R_T(x, x') = \mathbb E_{\tau, i, j} [\gamma^{|i-j|}|x_i=x, x_j=x']. 
\end{equation}
In Appendix \ref{proofs}, we demonstrate that $\mathcal{L}_{\mathrm{spec}}^{\mathrm{pop}}(h)$ is equivalent to minimizing $\mathbb{E}_{\tau, i, j} \big[(s(x_i, x_j)- R_T(x_i, x_j))^2\big]$, with the embedding similarity $s(x_i, x_j) = h(x_i)^\top h(x_j)$.
We can then study the target $R_T$ in the limit of long trajectories, and connect it to the successor measure $M^\beta$ of the behavioral policy $\pi_\beta$.

\begin{theorem}[Population target]
\label{prop:pop-target}
Under the setup above, with a finite state space, an irreducible, aperiodic Markov chain $P$ induced by $\pi_\beta$ with stationary distribution $\mu$ and $x_0\sim\mu$, for all $x,x'\in\mathcal S$, as $T\to\infty$,
\begin{equation}
    T\,R_T(x,x') \;\longrightarrow\;
    \frac{M^\beta(x,x')}{\mu(x')} + \frac{M^\beta(x',x)}{\mu(x)}
    - \frac{\mathbf 1[x=x']}{\mu(x)} .
    \label{eq:pop-target-def}
\end{equation}
Moreover, if the induced Markov chain is reversible, i.e., $\mu(x)P(x'|x)=\mu(x')P(x|x')$,
\begin{equation}
    T\,R_T(x,x') \;\longrightarrow\;
    \frac{2M^\beta(x,x')}{\mu(x')} - \frac{\mathbf 1[x=x']}{\mu(x')} .
    \label{eq:pop-target-def-reversible}
\end{equation}
\end{theorem}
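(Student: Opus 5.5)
We write $R_T$ as a ratio and compute the numerator and denominator separately under stationarity. By definition,
\begin{equation*}
R_T(x,x')=\frac{\sum_{i,j}\gamma^{|i-j|}\Pr(x_i=x,x_j=x')}{\sum_{i,j}\Pr(x_i=x,x_j=x')} .
\end{equation*}
Because $x_0\sim\mu$ and $\mu$ is stationary, every marginal is $\mu$. Hence $\Pr(x_i=x,x_j=x')=\mu(x)P^{j-i}(x,x')$ for $j\ge i$, and symmetrically for $j<i$.

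\textbf{Denominator.} Summing over all $T^2$ pairs gives
\begin{equation*}
\sum_{i,j}\Pr(x_i=x,x_j=x')=\mu(x)\sum_{i\le j}P^{j-i}(x,x')+\mu(x')\sum_{j<i}P^{i-j}(x',x).
\end{equation*}
For a finite irreducible aperiodic chain, $P^k(x,x')\to\mu(x')$ geometrically fast. By Ces\`aro averaging, the double sum therefore behaves like $T^2\mu(x)\mu(x')(1+o(1))$. Equivalently, $T^{-2}$ times the denominator tends to $\mu(x)\mu(x')$.

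\textbf{Numerator.} Split by lag $k=j-i$. The diagonal $k=0$ contributes $T\mu(x)\mathbf 1[x=x']$. Each lag $k\ge1$ with $j>i$ occurs $T-k$ times and contributes $(T-k)\gamma^k\mu(x)P^k(x,x')$. The lags with $i>j$ contribute $(T-k)\gamma^k\mu(x')P^k(x',x)$. Since $\gamma<1$ and $P^k$ is bounded, dominated convergence applied to $\sum_k(1-k/T)\gamma^kP^k$ gives
\begin{equation*}
\frac1T\sum_{i,j}\gamma^{|i-j|}\Pr(\cdot)\;\to\;\mu(x)M^\beta(x,x')+\mu(x')M^\beta(x',x)-\mu(x)\mathbf 1[x=x'].
\end{equation*}
The subtracted term corrects for the $k=0$ term being counted in both one-sided successor sums. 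Here we use $M^\beta=\sum_{t\ge0}\gamma^tP^t$.

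\textbf{Combining the two.} Dividing the numerator by the denominator and multiplying by $T$ gives
\begin{equation*}
T\,R_T\to\frac{M^\beta(x,x')}{\mu(x')}+\frac{M^\beta(x',x)}{\mu(x)}-\frac{\mathbf 1[x=x']}{\mu(x')}.
\end{equation*}
On the diagonal $x=x'$ we have $\mu(x)=\mu(x')$, so the last term can equally be written as $\mathbf 1[x=x']/\mu(x)$, which is the form stated in the theorem.

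\textbf{Reversible case.} Detailed balance, $\mu(x)P(x'|x)=\mu(x')P(x|x')$, extends to all powers: $\mu(x)P^t(x,x')=\mu(x')P^t(x',x)$. Summing over $t$ with weights $\gamma^t$ gives $M^\beta(x',x)/\mu(x)=M^\beta(x,x')/\mu(x')$. Substituting this into the general limit yields the stated formula.

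The main care point is justifying the limit exchanges. The $(1-k/T)$ weights are handled by dominated convergence, using the geometric decay of $\gamma^k$. The denominator asymptotics follow from convergence of $P^k$ to $\mu$, which relies on irreducibility, aperiodicity and a finite state space. No earlier results of the paper are needed.
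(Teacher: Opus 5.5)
Your proposal is correct and follows essentially the same route as the paper: write $T\,R_T$ as a ratio, split both sums by the lag $k=j-i$, apply dominated convergence to the $(1-k/T)\gamma^kP^k$ sums in the numerator, and use a Ces\`aro/Toeplitz argument with $P^k(x,x')\to\mu(x')$ to get the denominator limit $\mu(x)\mu(x')$ before dividing. Your derivation of the reversible case, extending detailed balance to all powers $P^t$, spells out a step the paper leaves implicit.
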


Embedding similarity is thus encouraged to regress a target that tracks a symmetrized likelihood ratio, holding the behavioral successor measure as its numerator.
This relates the planning score in Equation~\ref{eq:planning} to discounted visitation between predicted states and the goal, in both temporal directions and relative to their stationary frequencies.

\section{Experiments}

Our empirical evaluation revolves around three visual manipulation environments: OGBench Cube and Scene \citep{park2025ogbenchbenchmarkingofflinegoalconditioned} and CALVIN \citep{mees2022calvinbenchmarklanguageconditionedpolicy}. Scene and CALVIN extend the evaluation beyond the simpler manipulation settings considered in LeWorldModel~\citep{LeWorldModel}, introducing multiple objects and articulated fixtures whose states must be represented for control. These environments directly test whether the learned representation preserves information about the surrounding scene as well as the robot itself. We primarily compare SpecWM to LeWM~\citep{LeWorldModel}, and we instantiate SpecWM for each of the kernels introduced in Section~\ref{sec:kernel_design}. A generally observed phenomenon in world modeling is that predictors tend to generalize poorly to random actions when trained on only expert data \citep{fu2021d4rldatasetsdeepdatadriven, kidambi2021morelmodelbasedoffline, yu2020mopomodelbasedofflinepolicy}. Motivated by this, we generate failure data for each environment and append it to the canonical datasets; this decision is ablated in Appendix \ref{app:data_ablations}. 

Our main points of comparison are success rate with CEM planning (Section \ref{planning}) and linear recoverability of physical quantities from embeddings (Section \ref{probing_results}). Further, we provide an ablation of various kernels in both fully specified and underspecified settings in Section \ref{kernel_ablations}. We extend these results with additional ablations in Appendix~\ref{app:extra_exp}.
For CALVIN, we augment training clips with temporally distant observations and use a multiscale kernel to capture block movements that short clips often miss. Appendix~\ref{reproducibility} details this construction, failure data generation, and evaluation settings.

\subsection{Planning \label{planning}}

With temporal and quasimetric kernels, SpecWM improves upon LeWM by an average of 21 percentage points in planning success rate (Figure \ref{fig:cem}). Intuitively, the planner ranks predicted outcomes by their embedding similarity to the goal, which spectral alignment shapes to reflect temporal or quasimetric proximity. Overall, we observe that a learned quasimetric kernel is generally on par with a readily available temporal kernel, but either may be preferable depending on the environment. 

\begin{figure}[h]
  \centering
  \includegraphics[width=1\linewidth]{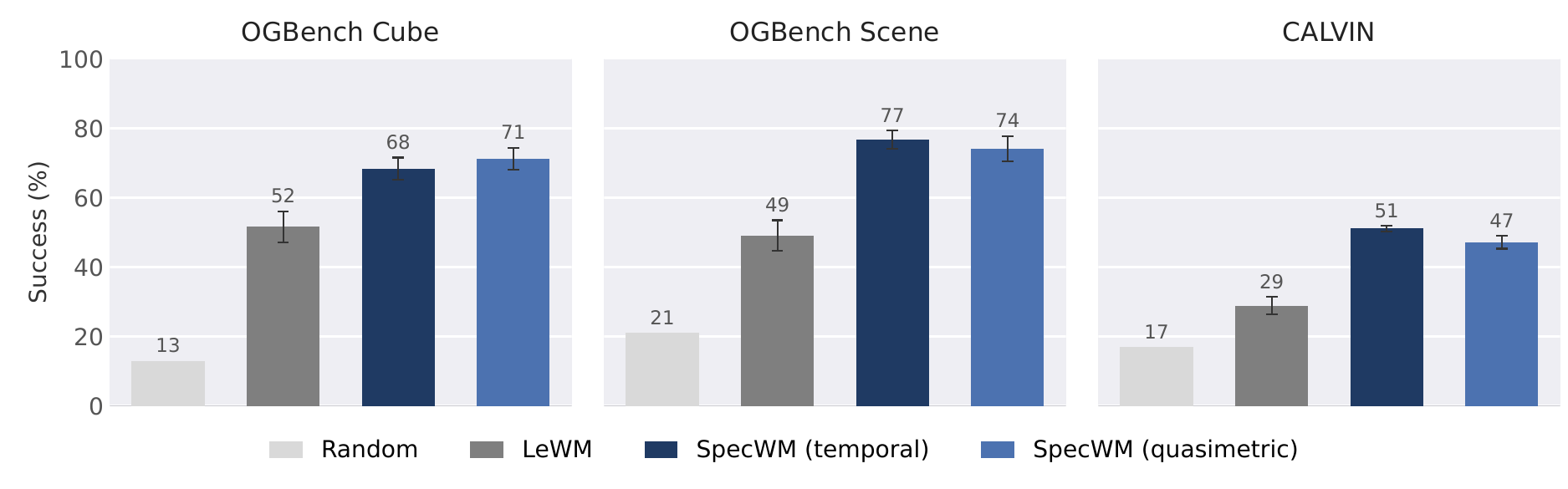}
  \caption{Average CEM planning success rate on 100 fixed tasks per environment. Bars report the mean over the 20k, 24k and 28k checkpoints and 3 seeds, with whiskers showing one cross-seed standard deviation. SpecWM consistently outperforms LeWM.}
  \label{fig:cem}
\end{figure}

\subsection{Probing}
\label{probing_results}
To isolate representation quality from prediction accuracy, we now evaluate linear probing performance for the position of objects across environments.
We consider a held-out set of observations that the encoder was never trained on for each dataset, and define $5$ train-evaluation splits.
For each split, we fit a ridge regressor for each state variable, choose regularization by cross-validation, and report linear recoverability on the evaluation set.
We report $R^2$ between predicted and ground-truth values, averaged over variables and splits.

As reported in Figure \ref{fig:probes}, we find that SpecWM does not always beat LeWM on probing, and probe quality does not directly correlate with planning performance. 

\begin{figure}[h]
  \centering
  \includegraphics[width=1\linewidth]{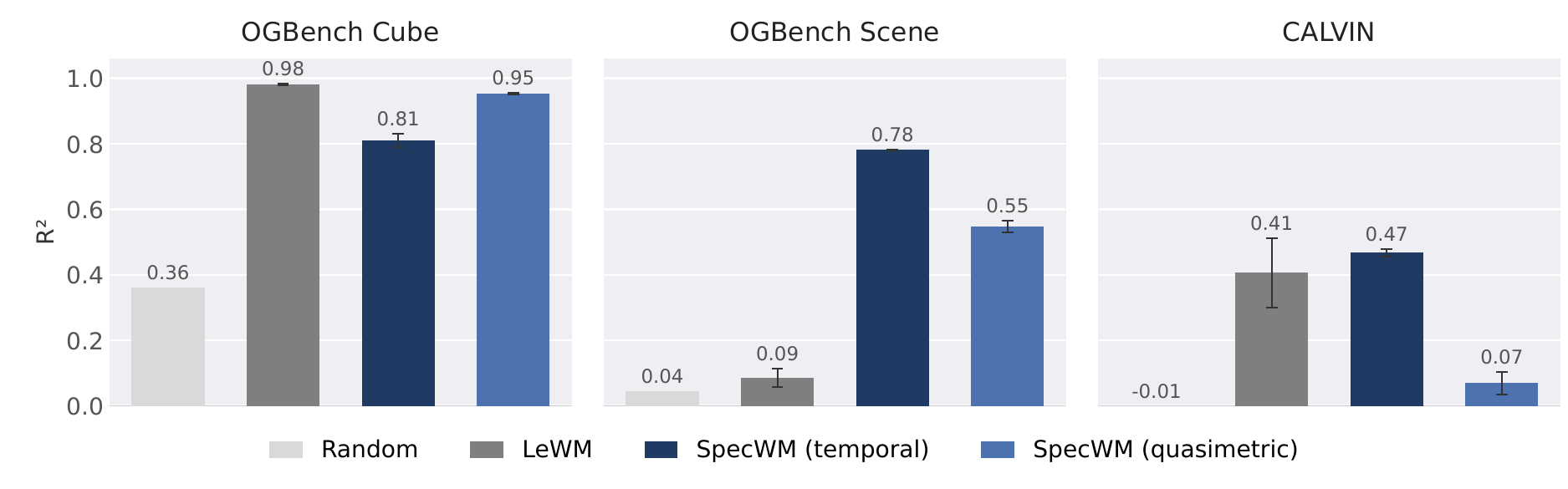}
  \caption{Linear probe $R^2$ of the manipulated object's position from each model's final checkpoint embedding. Bars report the mean over 3 seeds, with whiskers showing one cross-seed standard deviation. Random is an untrained encoder of the same architecture.}
  \label{fig:probes}
\end{figure}
\subsection{Kernel Ablations \label{kernel_ablations}}

While the previous evaluations focus on the temporal and quasimetric targets, we now extend the investigation to the full set of kernels we introduced in Section~\ref{sec:method}, including supervised, fully specified variants (i.e., state, depth, and proprioception), and unsupervised, underspecified variants (i.e., temporal, quasimetric).
The latter will help in quantifying the information lost due to lack of ground-truth supervision.
Table \ref{tab:kernel_probes} reports linear probes on all of the fixtures in the Scene environment.

The fully specified state kernel achieves the highest average $R^2$ (0.93). Under underspecified supervision, the temporal kernel nearly matches the state kernel and outperforms the quasimetric kernel. Proprioceptive kernels preserve the robot's configuration but largely omit object states, supporting Theorem \ref{thm:kni-budget}. For the fully specified kernels, we note that kernel eigenvectors are indeed strongly recoverable from the learned embeddings (Appendix~\ref{app:spectral_recovery}).

\begin{table}[h!]
\centering
\caption{Linear probe $R^2$ on held-out OGBench
Scene episodes. Bold marks the best result in each row and
results within 0.01 of it.}
\label{tab:kernel_probes}
\setlength{\tabcolsep}{4pt}
\renewcommand{\arraystretch}{1.25}
\begin{tabular}{lcccccccc}
& \multicolumn{5}{c}{Underspecified}
& \multicolumn{3}{c}{Fully specified}\\
\cmidrule(lr){2-6} \cmidrule(lr){7-9}
& Temporal & Quasimetric & State & Proprio & Depth
& State & Proprio & Depth\\
\midrule
Scene (avg)
& 0.84 & 0.78 & 0.85 & 0.16 & 0.15
& \textbf{0.93} & 0.16 & 0.50 \\
\quad \texttt{cube}
& 0.79 & 0.51 & 0.57 & 0.10 & 0.09
& \textbf{0.96} & 0.08 & 0.09  \\
\quad \texttt{arm}
& 0.67 & 0.66 & 0.69 & 0.69 & 0.60
& \textbf{0.73} & 0.67 & 0.59  \\
\quad \texttt{drawer}
& 0.91 & 0.90 & \textbf{0.98} & 0.01 & 0.01
& \textbf{0.99} & 0.01 & 0.87  \\
\quad \texttt{window}
& 0.87 & 0.88 & \textbf{0.99} & 0.00 & 0.01
& \textbf{0.99} & 0.00 & 0.91 \\
\quad \texttt{buttons}
& 0.96 & 0.94 & \textbf{1.00} & 0.02 & 0.01
& \textbf{0.99} & 0.02 & 0.04 \\
\bottomrule
\end{tabular}
\end{table}

\section{Conclusion}
We present world modeling through spectral alignment as a framework for designing the spectral methods that have so far been implicit in self-supervised learning.
The core idea is to directly specify the pairwise relationships that representations should preserve through a teacher kernel. We explore theoretical connections to fundamental results in spectral manifold learning and, for a specific choice of kernel, to the likelihood of reaching goals expressed in the data distribution.
Further, we validate the method empirically and analyze its abstraction properties.
We find that readily available kernels result in strong planning performance, which is not entirely correlated with probing accuracy. Pursuing a better understanding of the mechanisms enabling effective planning would be valuable. We view the study of which kernels are most conducive to large-scale self-supervised training as the next primary direction of work. We further view kernels involving language for semantic distances as an interesting future direction.

\subsection*{AI use statement}

In this work, we used generative AI tools to implement research methods and develop and revise code. We also used these tools to discuss research methodology and theoretical results, identify relevant literature, assist in proving theoretical statements, and draft and edit portions of the manuscript. We have reviewed all AI-assisted work. We take responsibility for the final content of this work, including text, claims, or artifacts produced with the aid of generative AI.

\subsection*{Reproducibility statement}

To ensure effective reproducibility, all code necessary for this paper is available at (\url{https://anonymous.4open.science/r/SpecWM/}). Full proofs of presented theorems can be found in Appendix \ref{proofs}. The dataset generation process along with evaluation protocols and training settings are described in Appendix \ref{reproducibility} as well as available in the code.

\bibliography{iclr2027_conference}
\bibliographystyle{iclr2027_conference}

\newpage
\appendix
\flushbottom

\section{Extended Related Work}
\label{app:extra_related}

\textbf{Kernel alignment}, the normalized Frobenius inner product between a candidate kernel and a target similarity matrix, was introduced by \citet{NIPS2001_1f71e393} as a criterion for kernel selection. \citet{KAMPFFMEYER2018816} later introduced alignment of an autoencoder's embedding Gram matrix to a provided kernel. We extend this paradigm by removing the reconstruction term and use action conditioned prediction as an additional shaping force for the latent. Their unsupervised kernel derives similarity from multiscale clustering of input features, which need not reflect the dynamical relationships relevant to control. We instead study both supervised and unsupervised kernels with a more direct relevance for control and world modeling. Furthermore, our trajectory-local formulation leaves cross-trajectory relationships unspecified, allowing flexibility in the learned geometry.

\textbf{Self-Supervised Representations in RL} have been explored both as auxiliary objectives in model-free reinforcement learning~\citep{schwarzer2021dataefficientreinforcementlearningselfpredictive}, and as part of latent model-based approaches~\citep{Schrittwieser_2020, hansen2024tdmpc2scalablerobustworld}.
A further line of work has focused on representation learning as a central component of practical methods, for instance enabling generalization in behavior cloning~\citep{lawson2026selfpredictiverepresentationscombinatorialgeneralization}, or robustness in value-based learning~\citep{zhang2021learninginvariantrepresentationsreinforcement}.
Self-supervised representations are also a core mechanism in unsupervised~\citep{laskin2021urlbunsupervisedreinforcementlearning} and zero-shot reinforcement learning~\citep{touati2023doeszeroshotreinforcementlearning}.
In particular, contrastive RL~\citep{eysenbach2023contrastivelearninggoalconditionedreinforcement}, forward-backward representation~\citep{touati2021learningrepresentationoptimizerewards, zheng2026reallylearnrepresentationoptimize}, ICVF~\citep{ghosh2023reinforcementlearningpassivedata} and TD-JEPA~\citep{bagatella2025tdjepalatentpredictiverepresentationszeroshot} recover representations which can be connected to the successor measure; we highlight a similar connection for spectral alignment with a temporal kernel.
Finally, this class of kernels shares the motivation of early works on temporal distance regression~\citep{tian2020modelbasedvisualplanningselfsupervised}, which however do not extract representations explicitly.

\section{Additional Experiments}
\label{app:extra_exp}
\begin{appendixexperiment}
\subsection{Spectral recovery}
\label{app:spectral_recovery}
Theorem \ref{thm:recovery} characterizes optimal representations as row-normalized spectral embeddings of the teacher kernel, up to an orthogonal transformation. To examine this connection empirically, we measure how well linear probes recover individual kernel eigenvectors from the learned representations (Figure \ref{fig:spectral_conversion}). We observe that recovery is strongest for the leading eigenvectors and decreases for those associated with smaller eigenvalues. 

The lower correlations observed in the state kernel are largely the result of a less even distribution of eigenvalue size, concentrating most of the energy in the lower eigenvector indices. In terms of mean $R^2$ weighted by each eigenvector's normalized eigenvalue, state gets a 0.965, proprio a 0.987, and depth a 0.992.

\begin{figure}[H]
    \centering
    \includegraphics[width=1\linewidth]{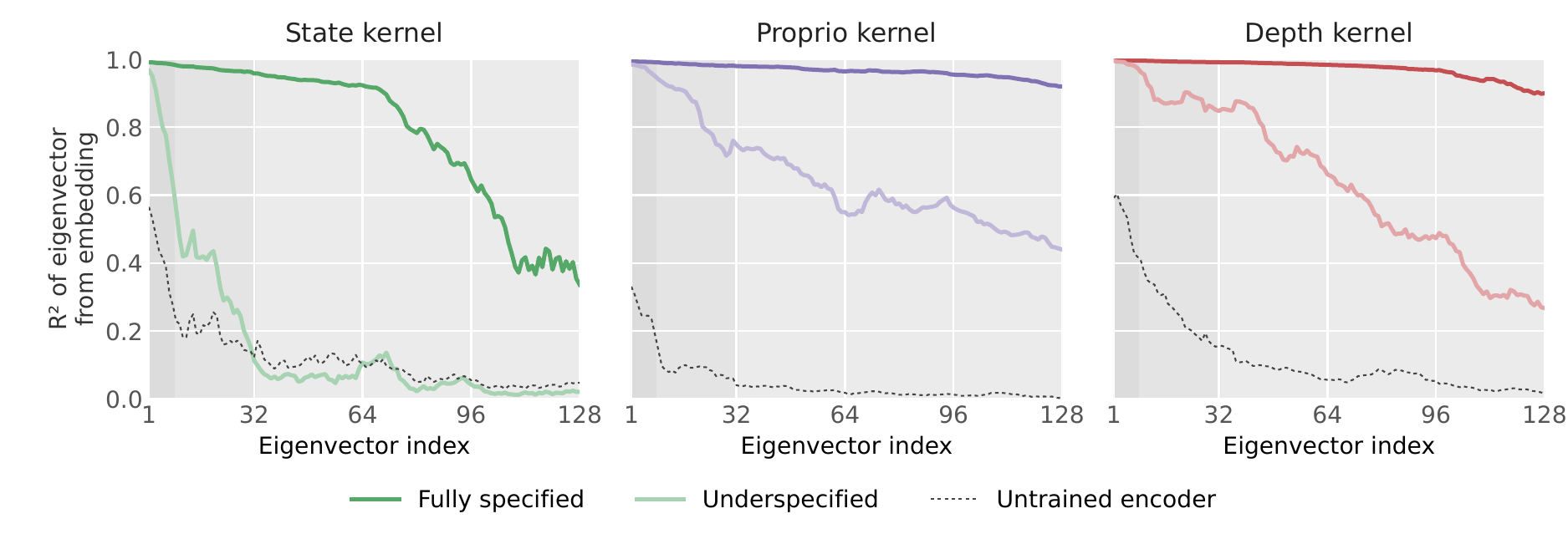}
    \caption{Linear probe $R^2$ between eigenvectors and embeddings from encoders trained on fully specified state, proprioception, and depth. Underspecified variants and an untrained ViT are included for comparison.}
    \label{fig:spectral_conversion}
\end{figure}
\end{appendixexperiment}
\begin{appendixexperiment}
\subsection{Prediction gradients and downstream performance}

In Figure~\ref{fig:stopgrad}, we additionally evaluate a variant of SpecWM which does not backpropagate the predictor regression objective through the target (i.e., the next state representations):
\begin{equation}
    \mathcal{L} = \mathcal{L}_\text{spec}
    - \lambda \sum_i \big(\cos(G,\text{sg}[H'])\big)_i
    \label{eq:total_loss_stopgrad}
\end{equation}
The prediction loss still updates the encoder through the current observation, but no longer through the next observation used as its target. We observe that planning success generally decreases, while probing results do not follow an obvious trend.

\begin{figure}[H]
    \centering
    \includegraphics[width=1\linewidth]{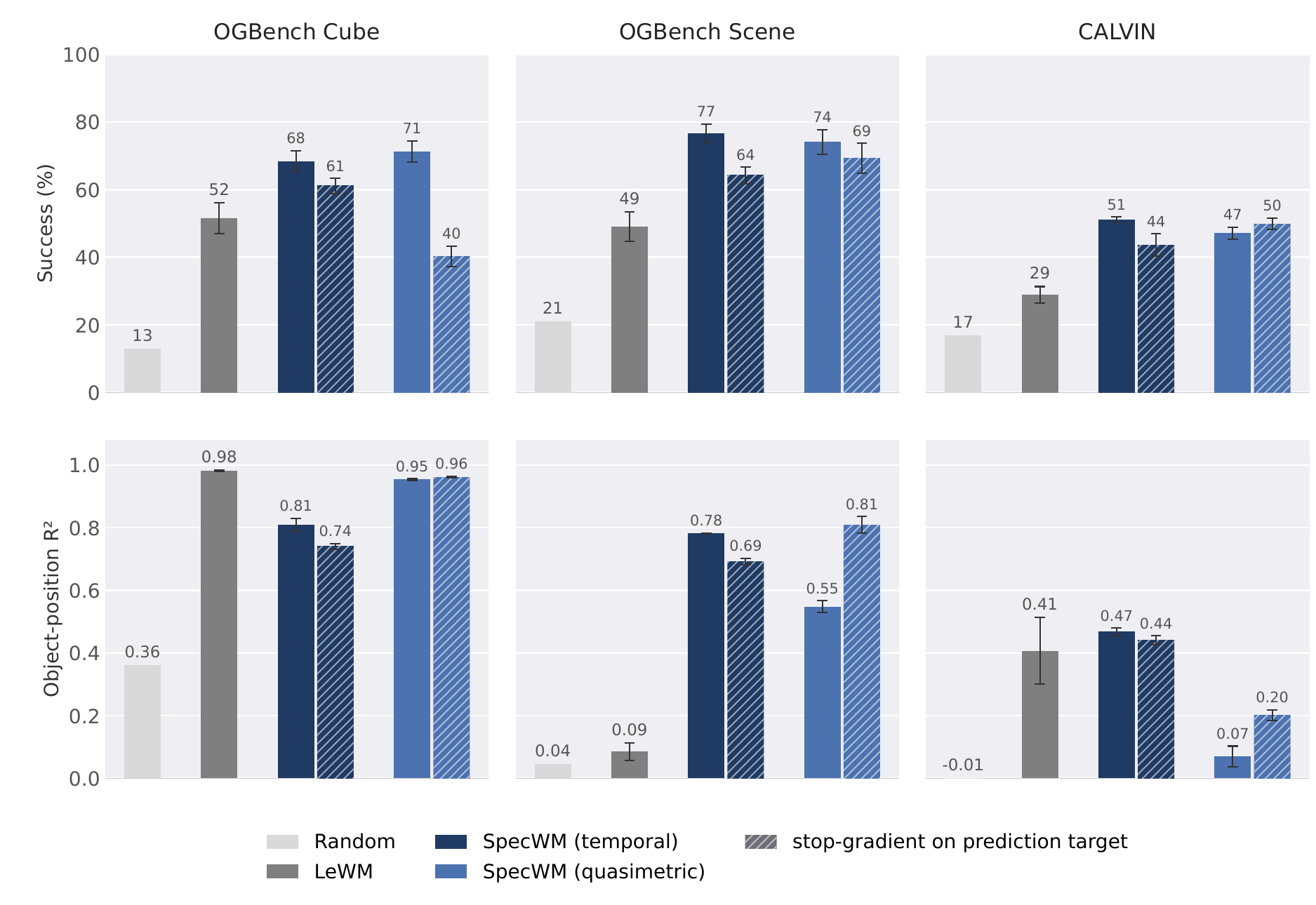}
    \caption{Effect of a stop-gradient on the prediction target. Bars report the mean over 3 seeds, with whiskers showing one cross-seed standard deviation. The stop-gradient generally lowers planning success and raises object recoverability in some cases while diminishing recoverability in others.}
    \label{fig:stopgrad}
\end{figure}
\end{appendixexperiment}

\begin{appendixexperiment}
\subsection{Data ablations}
\label{app:data_ablations}
It is known that predictors tend to generalize poorly to random actions when trained on expert-only data \citep{fu2021d4rldatasetsdeepdatadriven, kidambi2021morelmodelbasedoffline, yu2020mopomodelbasedofflinepolicy}. Here we ablate the failure data (Appendix~\ref{app:data}) by training LeWM and SpecWM with the temporal kernel on the play data alone, at matched updates, frames per update and schedule. As Figure~\ref{fig:data_ablation} shows, the failure data raises planning success for both models in every environment, but does not explain the difference between them.

\begin{figure}[H]
    \centering
    \includegraphics[width=1\linewidth]{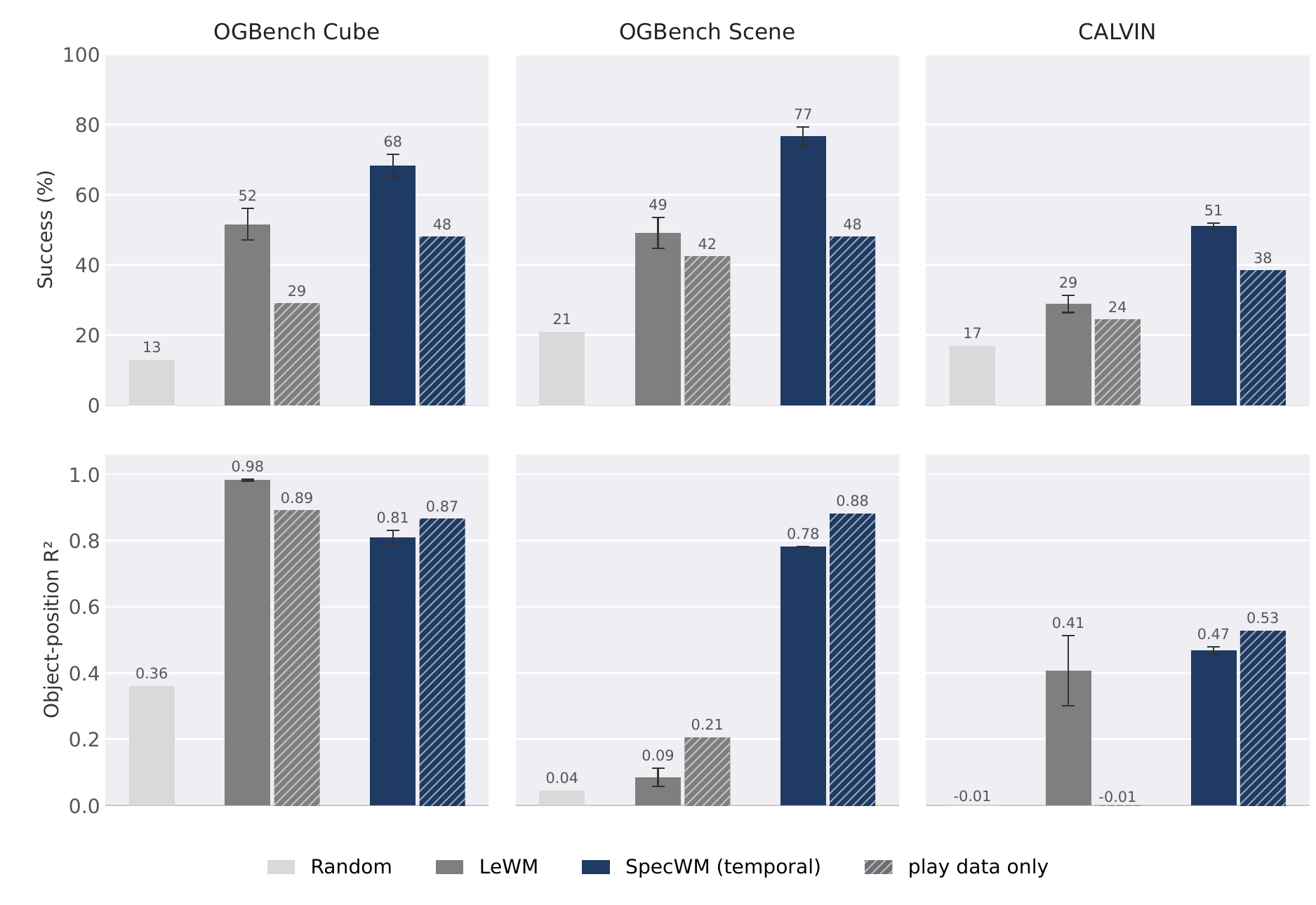}
    \caption{Effect of the failure data. CEM success (top) and object position probe $R^2$ (bottom) for LeWM and SpecWM (temporal), trained with the failure rollouts (solid, 3 seeds) or on the play data alone (hatched, 1 seed), at matched updates and frames per update. Failure data raises planning success for both models; SpecWM outperforms LeWM in both cases.}
    \label{fig:data_ablation}
\end{figure}
\end{appendixexperiment}

\begin{appendixexperiment}
\subsection{CEM cost ablation}
The two compared world models have different inherent distance metrics between samples in their latent distribution. For the purposes of the main paper, we use the one that fits each latent most naturally (L2 for LeWM and cosine similarity for SpecWM). Figure \ref{fig:cost-use-ablation} demonstrates that the choice of measure is not the cause of the planning gains.
\begin{figure}[H]
    \centering
    \includegraphics[width=0.45\linewidth]{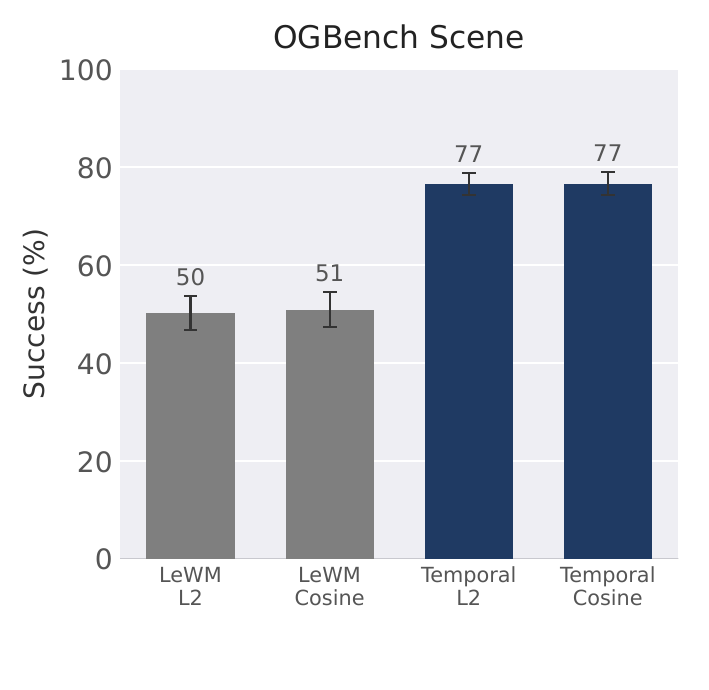}
    \caption{Effect of the choice of distance measure on CEM planning results. Cosine similarity and L2 distance between current embedding and target embedding are evaluated on both LeWM and SpecWM. The choice of distance measure appears to not be a major contributor to planning ability. Three training seeds are evaluated at their respective final three checkpoints for each setting.}
    \label{fig:cost-use-ablation}
\end{figure}
\end{appendixexperiment}

\begin{appendixexperiment}
\subsection{Normalization layer ablation}
To be faithful to the original LeWorldModel \citep{LeWorldModel} implementation, we use BatchNorm \citep{pmlr-v37-ioffe15} in the MLP producing the embedding from the ViT \citep{dosovitskiy2021an} encoder's CLS token. As SpecWM does not require BatchNorm to effectively optimize its objective, we use a LayerNorm for simplicity. As demonstrated in Figure \ref{fig:norm-ablation}, the choice of norm does not have a substantial impact on planning performance.
\begin{figure}[H]
    \centering
    \includegraphics[width=0.45\linewidth]{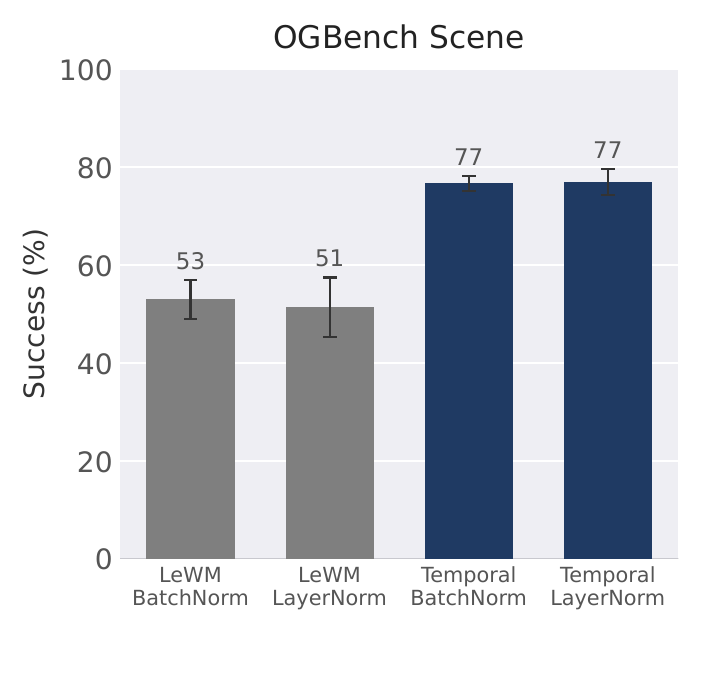}
    \caption{Effect of choice of normalization layer in the MLP converting the CLS token into an embedding on downstream CEM planning. One seed is used per run and CEM is reported as the average of the final three checkpoints of each run.}
    \label{fig:norm-ablation}
\end{figure}
\end{appendixexperiment}

\section{Reproducibility}
\label{reproducibility}
This appendix contains the necessary details to reproduce the results of the paper. For code, see: \url{https://anonymous.4open.science/r/SpecWM/}.

\subsection{Probing}
To measure to what extent the embeddings contain control-relevant objects, we perform linear probing of position coordinates. We sample 5,000 frames from the validation episodes, which the encoder never trained on, and split full episodes into training and testing sets of 80\% and 20\% respectively, repeated over five splits. Linear readouts use ridge regression, with all preprocessing and regularization
selection performed using training episodes only. Hyperparameters are selected using episode-grouped subsets of the training split. Ridge coefficients are selected from a logarithmic grid of 13 values between $10^{-2}$ and $10^{4}$. We then report held-out $R^2$ of the best ridge coefficient, using the full 80\% to fit the linear probe, averaged over the five splits.

\subsection{Long-range pairs on CALVIN}
\label{app:calvin_far}
In CALVIN, blocks move only when manipulated, so their positions often remain constant within a short clip. Within-clip temporal supervision therefore provides little signal to distinguish block positions. We augment each clip with 8 observations from the same episode, sampled $300$ to $3{,}000$ raw frames away (roughly 20 seconds to 3 minutes), to capture changes over longer periods. These observations contribute only to the spectral loss, using their original time indices, and require no associated actions.

To capture both short and long separations, we average two centered, trace-normalized Laplacian kernels with bandwidths $\sigma_1=7$ and $\sigma_2=45$, measured in sampled observations. We use the same construction for the quasimetric kernel:
\begin{equation}
    K_{ij} = \tfrac{1}{2}\Big(e^{-|i-j|/\sigma_1} + e^{-|i-j|/\sigma_2}\Big).
\end{equation}
Scene and Cube clips already capture the relevant changes, so we retain a single-scale kernel.

\subsection{Planning via CEM}
CEM hyperparameters are detailed in Table~\ref{tab:cem-config}. The base action sampling distribution is a Gaussian with starting mean and variance set according to expert data statistics. SpecWM cost-to-go is evaluated using cosine similarity and LeWM uses L2 distance.

\begin{table}[h]
\centering
\caption{CEM planning configuration.}
\begin{tabular}{lccc}
\toprule
\textbf{Parameter} & \textbf{Scene} & \textbf{Cube} & \textbf{CALVIN} \\
\midrule
Candidate sequences & 300 & 300 & 300 \\
CEM iterations & 30 & 30 & 30 \\
Elite sequences & 30 & 30 & 30 \\
Initial variance scale & 1.0 & 1.0 & 1.0 \\
Planning horizon (blocks) & 5 & 5 & 5 \\
Environment steps per block & 5 & 5 & 10 \\
Steps executed before replanning & 15 & 15 & 30 \\
Number of evaluation tasks & 100 & 100 & 100 \\
Goal offset (steps) & 25 & 25 & 50 \\
Evaluation budget (steps) & 60 & 60 & 100 \\
\bottomrule
\end{tabular}
\label{tab:cem-config}
\end{table}

\subsection{Training}

\paragraph{Architecture.} The JEPA architecture follows that of LeWorldModel and is consistent across objectives aside from normalization scheme. LeWorldModel uses BatchNorm in its projection head in line with \citet{LeWorldModel}, whereas SpecWM uses LayerNorm. Their remaining shared architectural settings are
summarized in Table~\ref{tab:model-architecture}.

\begin{table}[t]
\centering
\caption{Shared visual-encoder and action-conditioned dynamics architecture.}
\begin{tabular}{lp{0.66\linewidth}}
\toprule
Component & Configuration \\
\midrule
Visual encoder
& Randomly initialized ViT-T/14 with output dimension 192 \\
Projection head
& MLP $192\rightarrow2048\rightarrow192$ \\
Action encoder
& $1\times1$ Conv1D followed by an MLP with hidden dimension 768 and output dimension 192 \\
Dynamics predictor
& 6-layer action-conditioned transformer \\
Attention
& 16 heads with head dimension 64 \\
Transformer MLP dimension
& 2048 \\
Dropout
& 0.1 \\
Observation-history length
& 3 sampled observations \\
\bottomrule
\end{tabular}
\label{tab:model-architecture}
\end{table}
\begin{table}[t]
\caption{Environment-dependent action representation. Each action block contains all low-level actions between two consecutive sampled observations.}
\centering
\begin{tabular}{lccc}
\toprule
Environment & Frame skip & Raw action dim. & Action-block dim. \\
\midrule
OGBench Scene & 5  & 5 & 25 \\
OGBench Cube  & 5  & 5 & 25 \\
CALVIN        & 10 & 7 & 70 \\
\bottomrule
\end{tabular}
\label{tab:action-representation}
\end{table}

\paragraph{Optimization and training.}
Optimization hyperparameters are detailed in Table \ref{tab:training}. The unusual learning-rate values result from square-root batch-size scaling of base rates $5\times10^{-5}$ for AdamW and $10^{-3}$ for Muon. For CALVIN, SpecWM additionally encodes 8 distant observations per clip for the spectral loss, giving $24 \times (23+8)=744$ total images per update. LeWorldModel uses 552 images per update in all environments.

\begin{table}[t]
\centering
\caption{Optimization and student-training settings.}
\begin{tabular}{lp{0.68\linewidth}}
\toprule
Setting & Configuration \\
\midrule
Peak learning rates
& $1.461\times10^{-3}$ for matrices,
$2.922\times10^{-4}$ for convolutions, and
$7.30\times10^{-5}$ for biases and normalization parameters \\
MuonAdamW
& $(\beta_1,\beta_2)=(0.9,0.999)$, $\epsilon=10^{-8}$,
Muon momentum $0.95$, Nesterov updates, and RMS matching \\
Regularization
& Weight decay $10^{-3}$ on weights; no decay on biases or normalization
parameters; gradient clipping at $1.0$ \\
SpecWM
&33,375 updates; 552 base clip frames per update ($24 \times 23$);
$\lambda_{\mathrm{spec}}=1$ for Cube and Scene and $2$ for CALVIN;
cosine prediction-anchor weight $2$.\\
LeWorldModel
& 33,375 updates; 552 frames per update ($138\times4$);
prediction-MSE weight $1$ and SIGReg weight $0.09$ \\
\bottomrule
\end{tabular}
\label{tab:training}
\end{table}

\paragraph{Quasimetric training.}
We train the quasimetric teachers using a separate ViT-T/14 followed by a projection and an IQE head.
The IQE head's 384-dimensional output is divided into 24 IQE groups of
dimension 16, producing an asymmetric directed distance. The training objective expands distances between distinct observations while
constraining consecutive observations \citep{wang2023optimalgoalreachingreinforcementlearning}:
\[
\begin{aligned}
\mathcal{L}_{\mathrm{push}}
&=
-\mathbb{E}_{i\neq j}\!\left[d(z_i,z_j)\right],\\
\mathcal{L}_{\mathrm{step}}
&=
\mathbb{E}_t\!\left[
    \max\!\left(0,d(z_t,z_{t+1})-1\right)^2
\right],\\
\mathcal{L}_{\mathrm{QRL}}
&=
\mathcal{L}_{\mathrm{push}}
+\lambda\left(\mathcal{L}_{\mathrm{step}}-\epsilon^2\right).
\end{aligned}
\]
The dual variable is initialized to $1$ and updated with learning rate $0.05$
toward a constraint value of $\epsilon=0.25$, while being clipped to $[0,50]$.
Teachers are trained for 9,500 updates using 552 frames per update
($24$ clips of $23$ frames).

\subsection{Data}
\label{app:data}
For all environments, we supplement the official datasets with failure data to aid in predictor training using noised actions. For each
episode or segment $i$ of length $L_i$, we sample an onset
\[
t_i \sim \operatorname{Uniform}
\left(
\left\{\lfloor0.05L_i\rfloor,\ldots,\lfloor0.90L_i\rfloor\right\}
\right)
\]
and alternate the noise scale between $\sigma_i=0.3$ and $\sigma_i=0.6$.
The replayed action is
\[
\tilde a_t =
\begin{cases}
a_t, & t<t_i,\\
\operatorname{clip}(a_t+\sigma_i\epsilon_t,-1,1),
& t\geq t_i,
\end{cases}
\qquad
\epsilon_t\sim\mathcal{N}(0,I).
\]
Each replay is initialized from its recorded simulator state and follows the
recorded open-loop actions exactly before the noise onset. For Scene and Cube, noise is applied to all five action dimensions. For
CALVIN, noise is applied only to the six continuous motion dimensions, while
the binary gripper command is retained from the recorded trajectory.
The resulting Scene and Cube datasets each contain 1,000,000 frames from the
official dataset and 1,000,000 frames from our failure rollouts. The CALVIN
dataset contains 1,795,045 official frames and 1,793,322 failure frames.

\section{Full Proofs}
\label{proofs}

The proofs below follow the order of the stated theorems in Section \ref{theory}.

\subsection{Proof of Theorem~\ref{thm:recovery}}
\label{proof:recovery}

\lemmasfor{thm:recovery}
Let $H \in \R^{n \times d}$ be a free embedding table with nonzero rows $h_1, \dots, h_n$, and let $K^\alpha$ be the centered, tempered target. The loss rescales each row to the length the target prescribes, forms the Gram matrix, and compares it
to the target after double centering:
\begin{equation}
    \label{eq:loss}
    \bar h_i = \sqrt{K^\alpha_{ii}}\,\frac{h_i}{\norm{h_i}},
    \qquad
    S = \bar H \bar H^\top,
    \qquad
    \mathcal{L}_\text{spec}(H) = \norm{CSC - K^\alpha}_F^2,
\end{equation}
with $C = I - \tfrac1n\mathbf{1}\mathbf{1}^\top$.
Note that the rescaling pins the diagonal of $S$ to that of the target,
\begin{equation}
    \label{eq:diag}
    S_{ii} = \norm{\bar h_i}^2 = K^\alpha_{ii},
\end{equation}
for every $H$.

Let $\Tilde K = CKC$ have eigenvalues
$\lambda_1 \geq \dots \geq \lambda_n$ with unit eigenvectors $u_1, \dots, u_n$,
and let $m \geq 1$ be the number of directions the tempering keeps (the minimum of the count of positive $\lambda_\ell$ and the truncation count). Since the tempered spectrum vanishes outside these $m$
directions, the target becomes
\begin{equation}
    \label{eq:target}
    K^\alpha = \sum_{\ell=1}^{m} s_\ell^2\, u_\ell u_\ell^\top,
    \qquad
    s_\ell = \left(\frac{n\lambda_\ell^\alpha}
                        {\sum_{j \leq m}\lambda_j^\alpha}\right)^{1/2}.
\end{equation}
Three properties of the target are used below:
\begin{equation}
    \label{eq:facts}
    K^\alpha  \succeq 0,
    \qquad
    \operatorname{rank} K^\alpha  = m,
    \qquad
    K^\alpha \mathbf{1} = 0 .
\end{equation}
The first two come from \eqref{eq:target}, whose $m$ terms have
orthogonal directions $u_\ell$ and positive weights $s_\ell^2$. For the
third, $C\mathbf{1} = 0$ gives $\Tilde K\mathbf{1} = 0$, and every $u_\ell$ in
\eqref{eq:target} has $\lambda_\ell > 0$, so each is orthogonal to $\mathbf{1}$.
As assumed in Theorem~\ref{thm:recovery}, the target has
strictly positive diagonal entries.

\begin{prooflemma}
\label{lem:recovery-centered-gram}
$\mathcal{L}_\text{spec}(H) = 0$ if and only if $S = K^\alpha $.
\end{prooflemma}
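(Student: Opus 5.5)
The plan is to prove the two directions separately. The easy direction uses the centering property $K^\alpha\mathbf 1=0$ from \eqref{eq:facts}. The harder direction uses the diagonal pinning \eqref{eq:diag} to undo the double centering. Since $\mathcal L_\text{spec}(H)=\norm{CSC-K^\alpha}_F^2$ is a squared Frobenius norm, $\mathcal L_\text{spec}(H)=0$ holds if and only if $CSC=K^\alpha$. So the claim reduces to showing that $CSC=K^\alpha$ holds exactly when $S=K^\alpha$.

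\emph{($\Leftarrow$)} Suppose $S=K^\alpha$. By \eqref{eq:facts}, $K^\alpha\mathbf 1=0$, and $K^\alpha$ is symmetric, so also $\mathbf 1^\top K^\alpha=0$. Hence $CK^\alpha=K^\alpha-\tfrac1n\mathbf 1\mathbf 1^\top K^\alpha=K^\alpha$, and similarly $K^\alpha C=K^\alpha$. Therefore $CSC=CK^\alpha C=K^\alpha$, and the loss vanishes.

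\emph{($\Rightarrow$)} This is the main step, because double centering is not injective. It discards any perturbation of the form $v\mathbf 1^\top+\mathbf 1v^\top$, so we must show the diagonal constraint rules such a perturbation out. Expand the centering entrywise. Set $r=\tfrac1n S\mathbf 1$ and $\mu=\tfrac1{n^2}\mathbf 1^\top S\mathbf 1$. Then
\begin{equation*}
(CSC)_{ij}=S_{ij}-r_i-r_j+\mu .
\end{equation*}
Define $v=r-\tfrac{\mu}{2}\mathbf 1$. The assumption $CSC=K^\alpha$ then reads
\begin{equation*}
S=K^\alpha+v\mathbf 1^\top+\mathbf 1 v^\top .
\end{equation*}
Evaluating this identity on the diagonal gives $S_{ii}=K^\alpha_{ii}+2v_i$. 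By \eqref{eq:diag}, every $H$ with nonzero rows satisfies $S_{ii}=K^\alpha_{ii}$. Hence $v_i=0$ for all $i$, so $v=0$ and $S=K^\alpha$.

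The only hypotheses used are that the rows $h_i$ are nonzero, so that the rescaling \eqref{eq:loss} and hence \eqref{eq:diag} are well defined, and the centering property in \eqref{eq:facts}. The positive-diagonal assumption is not needed for this lemma. It becomes relevant later, when $S=K^\alpha$ is factored as $\bar H\bar H^\top$ to identify the minimizers $H^\star$.
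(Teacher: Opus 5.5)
Your proof is correct. The $(\Leftarrow)$ direction is the same as the paper's, but your $(\Rightarrow)$ direction takes a genuinely different route.

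\textbf{The paper's route.} It takes the trace of $CSC=K^\alpha$ to get $\operatorname{tr}K^\alpha=\operatorname{tr}S-\tfrac1n\mathbf 1^\top S\mathbf 1$. It then uses only the summed diagonal constraint $\operatorname{tr}S=\operatorname{tr}K^\alpha$ to conclude $\mathbf 1^\top S\mathbf 1=0$. Finally it uses the fact that $S=\bar H\bar H^\top$ is a Gram matrix: $\mathbf 1^\top S\mathbf 1=\norm{\bar H^\top\mathbf 1}^2=0$ forces $\bar H^\top\mathbf 1=0$. Hence $S\mathbf 1=0$, so $S$ is already centered and $S=CSC=K^\alpha$.

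\textbf{Your route.} You make the ambiguity of double centering explicit as $S-CSC=v\mathbf 1^\top+\mathbf 1v^\top$. You then eliminate $v$ with the entrywise constraint $S_{ii}=K^\alpha_{ii}$, since the diagonal of $v\mathbf 1^\top+\mathbf 1v^\top$ is exactly $2v$.

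\textbf{What each buys.} The paper needs only one scalar consequence of the diagonal pinning, but it relies on positive semidefiniteness of $S$. For a general symmetric $S$, any perturbation $v\mathbf 1^\top+\mathbf 1v^\top$ with $\mathbf 1^\top v=0$ passes the trace test. The paper's argument also yields, as a by-product, that the rescaled embeddings have zero mean, $\bar H^\top\mathbf 1=0$. Your argument uses the full diagonal but no positivity. It therefore shows more generally that double centering is injective on symmetric matrices with a prescribed diagonal.

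Your closing remark, that the positive-diagonal hypothesis is not needed for this lemma, is accurate for both proofs.
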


\begin{proof}
If $S = K^\alpha $ the loss is $\norm{CK^\alpha C - K^\alpha }_F^2$. Since \(K^\alpha \) is symmetric and $K^\alpha\mathbf1=0$, we have $\mathbf1^\top K^\alpha =0$. Thus, $CK^\alpha  = K^\alpha  - \tfrac1n\mathbf{1}(K^\alpha \mathbf{1})^\top = K^\alpha $ and likewise $K^\alpha C = K^\alpha  - \tfrac1n (K^\alpha \mathbf{1}) \mathbf{1}^\top =K^\alpha $. Hence $CK^\alpha C = K^\alpha $ and the loss is $0$.

Conversely, suppose $CSC = K^\alpha $. Centering is not injective, so this alone does not give $S = K^\alpha $. Accordingly, we use the diagonal constraint \eqref{eq:diag}. Since $C$ is symmetric and idempotent,
\begin{align*}
    \operatorname{tr} K^\alpha 
      &= \operatorname{tr}(CSC) \\
      &= \operatorname{tr}(SC^2) \\ 
      &= \operatorname{tr}(SC) \\ 
      &= \operatorname{tr} S - \tfrac1n\,\mathbf{1}^\top S\mathbf{1}.
\end{align*}
Since $\operatorname{tr} S = \operatorname{tr} K^\alpha $ by \eqref{eq:diag}, we have that $\mathbf{1}^\top S \mathbf{1}=0$. Expanding, we see that $\mathbf{1}^\top S\mathbf{1} = \mathbf{1}^\top\bar H \bar H^\top \mathbf{1}= \norm{\bar H^\top\mathbf{1}}^2 = 0$, which forces $\bar{H}^\top \mathbf{1}=0$. Thus, we have that $\bar H\bar H^\top\mathbf{1} = S\mathbf{1} = 0$. Since $S$ is symmetric, $S\mathbf1=0$ also gives $\mathbf1^\top S=0$. Hence $CS=SC=S$, and therefore $S=CSC=K^\alpha$.
\end{proof}

\begin{prooflemma}
\label{lem:recovery-gram-factorization}
Assume \(m\le d\). Let \(U_m=[u_1,\ldots,u_m]\) contain the retained orthonormal eigenvectors of the centered kernel \(\widetilde K=CKC\).
Define $H_{\mathrm{spec}}\in\mathbb R^{n\times d}$ by assigning
sample $i$ the row
\[
    (H_{\mathrm{spec}})_{i,:}
    =
    \bigl(s_1u_1(i),\ldots,s_mu_m(i),
    \underbrace{0,\ldots,0}_{d-m}\bigr),
\]
where $u_\ell(i)$ is the $i$-th entry of eigenvector $u_\ell$
and the scales $s_\ell$ are given by equation~\ref{eq:target}. Then $S=K^\alpha $ if and only if
$H=\operatorname{diag}(c)H_{\mathrm{spec}}Q$
for some $c\in\mathbb R_{>0}^n$ and orthogonal $Q\in\mathbb R^{d\times d}$.
\end{prooflemma}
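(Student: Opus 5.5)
The plan has two directions, both passing through the normalized rows $\bar h_i$. Note first that $S$ depends on $H$ only through $\bar H$, and that $\bar H$ is unchanged when each row of $H$ is multiplied by a positive scalar. Also, $H_{\mathrm{spec}}H_{\mathrm{spec}}^\top=\sum_{\ell\le m}s_\ell^2u_\ell u_\ell^\top=K^\alpha$ by \eqref{eq:target}. In particular the rows of $H_{\mathrm{spec}}$ satisfy $\|(H_{\mathrm{spec}})_{i,:}\|^2=K^\alpha_{ii}>0$, so they are nonzero.

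\emph{Sufficiency.} Take $H=\operatorname{diag}(c)H_{\mathrm{spec}}Q$. Row $i$ of $H$ is $c_i(H_{\mathrm{spec}})_{i,:}Q$, and its norm is $c_i\sqrt{K^\alpha_{ii}}$ because $Q$ is orthogonal. The rescaling in \eqref{eq:loss} therefore returns $\bar h_i=(H_{\mathrm{spec}})_{i,:}Q$, that is, $\bar H=H_{\mathrm{spec}}Q$. It follows that $S=H_{\mathrm{spec}}QQ^\top H_{\mathrm{spec}}^\top=K^\alpha$.

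\emph{Necessity.} Suppose $S=\bar H\bar H^\top=K^\alpha=H_{\mathrm{spec}}H_{\mathrm{spec}}^\top$, where both $\bar H$ and $H_{\mathrm{spec}}$ lie in $\mathbb R^{n\times d}$. The key step is the standard fact that two factorizations of the same Gram matrix with the same number of columns differ by an orthogonal matrix: $\bar H=H_{\mathrm{spec}}Q$ for some orthogonal $Q$. I would prove it by defining a map on row spaces, $T:\operatorname{row}(H_{\mathrm{spec}})\to\operatorname{row}(\bar H)$, by $T(H_{\mathrm{spec}}^\top a)=\bar H^\top a$ for $a\in\mathbb R^n$. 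This map is well defined and isometric because
\[
\|\bar H^\top a\|^2=a^\top Sa=\|H_{\mathrm{spec}}^\top a\|^2 .
\]
Both row spaces have dimension $\operatorname{rank}K^\alpha=m\le d$ by \eqref{eq:facts}. $T$ is therefore an isometry between $m$-dimensional subspaces of $\mathbb R^d$, and it extends to an orthogonal map of $\mathbb R^d$ by sending an orthonormal basis of the orthogonal complement of one subspace to an orthonormal basis of the complement of the other. Let $Q^\top$ be the matrix of this extension acting on column vectors. Applying the definition of $T$ with $a=e_i$ then gives $\bar H=H_{\mathrm{spec}}Q$.

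Finally, the definition of $\bar h_i$ gives $h_i=(\|h_i\|/\sqrt{K^\alpha_{ii}})\,\bar h_i$. Setting $c_i=\|h_i\|/\sqrt{K^\alpha_{ii}}$, which is positive because the rows of $H$ are nonzero and $K^\alpha_{ii}>0$, yields $H=\operatorname{diag}(c)\bar H=\operatorname{diag}(c)H_{\mathrm{spec}}Q$.

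The main obstacle is the orthogonal-extension step when $\bar H$ may be rank deficient. The argument handles it by comparing row spaces through the isometry $T$ rather than inverting either matrix. The hypothesis $m\le d$ is exactly what makes $H_{\mathrm{spec}}$, padded with $d-m$ zero columns, a valid $n\times d$ factor.
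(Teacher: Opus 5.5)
Your proof is correct and follows the paper's argument. Both directions go through $\bar H$, use $H_{\mathrm{spec}}H_{\mathrm{spec}}^\top=K^\alpha$, and recover $c_i=\|h_i\|/\sqrt{K^\alpha_{ii}}>0$; the only addition is that you prove the Gram-factorization fact ($\bar H\bar H^\top=H_{\mathrm{spec}}H_{\mathrm{spec}}^\top\Rightarrow\bar H=H_{\mathrm{spec}}Q$) via the row-space isometry and its orthogonal extension, which the paper simply cites as standard.
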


\begin{proof}

First, we will establish that $H_\text{spec}H_\text{spec}^\top=K^\alpha $. By equation \ref{spec_def} we have that
$H_\text{spec} = U_d \operatorname{diag}(s_1, \dots, s_d)$. Thus,
\begin{align*}
    H_\text{spec}H_\text{spec}^\top &= U_m \operatorname{diag}(s_1^2, \dots, s_m^2)\, U_m^\top \\
      &= \sum_{\ell=1}^{m} s_\ell^2\, u_\ell u_\ell^\top \\
      &= K^\alpha ,
\end{align*}
where the last equality is \eqref{eq:target}. The diagonal of a Gram matrix lists the squared row norms, so $\norm{(H_\text{spec})_i}^2 = K^\alpha _{ii}$. Second, we use a standard fact about Gram matrices: if $BB^\top = B'B'^\top$ for
$B, B' \in \R^{n \times d}$, then $B = B'Q$ for some orthogonal
$Q \in \R^{d \times d}$.

Suppose $S = K^\alpha $. Then $\bar H\bar H^\top = H_\text{spec}H_\text{spec}^\top$,
so $\bar H = H_\text{spec}Q$ for some orthogonal $Q$. By \eqref{eq:loss},
$h_i = c_i \bar h_i$ with $c_i = \norm{h_i}/\sqrt{K^\alpha_{ii}} > 0$, so
$H = \operatorname{diag}(c)\,\bar H = \operatorname{diag}(c)\,H_\text{spec}Q$.

Suppose instead $H = \operatorname{diag}(c)\,H_\text{spec}Q$. Since $Q$
preserves lengths, row $i$ of $H_\text{spec}Q$ has the same length as row $i$
of $H_\text{spec}$, namely $\sqrt{K^\alpha _{ii}} > 0$. The rows of $H$ are therefore
nonzero, and rescaling them to length $\sqrt{K^\alpha _{ii}}$ removes the factor
$c_i$, so $\bar H = H_\text{spec}Q$. Hence
$S = \bar H\bar H^\top = H_\text{spec}QQ^\top H_\text{spec}^\top
   = H_\text{spec}H_\text{spec}^\top = K^\alpha $.
\end{proof}

\begin{restatedresult}{Theorem}{thm:recovery}{Spectral recovery}
Suppose $m \leq d$. Then $\min_H \mathcal{L}_\text{spec}(H) = 0$, attained
exactly at
\begin{equation}
    H^\star = \operatorname{diag}(c)\, H_\text{spec}\, Q,
\end{equation}
with $c \in \R^n_{>0}$, $Q \in \R^{d \times d} \text{ orthogonal}$, and $H_\text{spec}$ the truncated spectral embedding of \eqref{spec_def}.
\end{restatedresult}

\begin{proof}[Proof of Theorem~\ref{thm:recovery}]
By Lemma~\ref{lem:recovery-centered-gram}, $\mathcal{L}_\text{spec}(H) = 0$ if and only if
$S = K^\alpha$, and by Lemma ~\ref{lem:recovery-gram-factorization} this holds if and only if $H$ has the form
\eqref{eq:minimizers}. The set of such $H$ is nonempty, since it contains
$H_\text{spec}$ itself ($c = \mathbf{1}$, $Q = I$), so the minimum is attained
and equals $0$.
\end{proof}

\subsection{Proof of Theorem \ref{thm:linear-recovery}}
We next connect this embedding geometry to linear probing performance. In particular, a kernel constructed from label relationships can make the labels themselves linearly recoverable from the embeddings, without directly regressing encoder outputs onto them. We formalize this below for row-normalized and centered vector-valued labels $Y$.

\begin{restatedresult}{Theorem}{thm:linear-recovery}{Linear recoverability}
Let $Y\in\mathbb R^{n\times p}$ satisfy $\mathbf1^\top Y=0$, row norms $\|y_i\|=1$, and $\operatorname{rank}(Y)\le d$. For $K=YY^\top$, $\alpha=1$, and all positive spectral directions retained, every global minimizer $H^\star$ of $\mathcal L_{\text{spec}}$ subject to $\|h_i\|=\sqrt d$ satisfies
\begin{equation}
    \label{eq:restate-zero-linear-probe}
    \min_{W\in\mathbb R^{d\times p}}\|Y-H^\star W\|_F^2=0.
\end{equation}
\end{restatedresult}

\begin{proof}[Proof of Theorem~\ref{thm:linear-recovery}]    We will construct $W$ that satisfies \ref{eq:restate-zero-linear-probe}. Since $Y$ is centered and its rows have unit norm, we have that $\Tilde{K}=CYY^TC=YY^T$ with $\operatorname{tr}(\Tilde{K})=n$. With $\alpha=1$ and all positive directions retained, the target is therefore $K^\alpha=YY^T$, with $K^\alpha_{ii}=1$. Write the compact SVD of $Y$ and $YY^T$ as  
    \begin{equation*}
        Y=U_m\Sigma_mV^T, \qquad YY^T=U_m\Sigma^2_mU^T_m. 
    \end{equation*}
    To pad the $m$-dimensional spectral embeddings to the encoder's $d$ dimensions, let $E_m=[I_m \;\; 0]\in \mathbb{R}^{m\times d}$. The spectral embedding is
    \begin{equation*}
        H_{\text{spec}}=U_m\Sigma_mE_m.
    \end{equation*}
    The rows of $H_{\mathrm{spec}}$ have unit norm because
$H_{\mathrm{spec}}H_{\mathrm{spec}}^\top=K^\alpha$ and
$K^\alpha_{ii}=1$. Multiplying by $\sqrt d$ makes each row
satisfy the required norm $\sqrt d$, while preserving zero
loss by Theorem~\ref{thm:recovery}. Since the loss is
nonnegative, every global minimizer under this row-norm
constraint must therefore also achieve zero loss. Further by Theorem \ref{thm:recovery}, each such minimizer has the form

    \begin{equation*}
        H^\star=\operatorname{diag}(c)\,H_{\text{spec}}Q.
    \end{equation*}
    Since $Q$ preserves lengths, the RMS constraint forces $c_i=\sqrt{d}$ for every $i$. Hence, we have that
    \begin{equation*}
        H^\star = \sqrt{d}U_m\Sigma_mE_mQ.
    \end{equation*}
    We can now define $W^\star$:
    \begin{equation}
        W^\star = \frac{1}{\sqrt{d}}Q^TE^T_mV^T.
    \end{equation}
    Using $QQ^T=I_d$ and $E_mE_m^T=I_m$, we have that
    \begin{align*}
        H^\star W^\star &= U_m\Sigma_mE_mQ Q^TE^T_mV^T\\
        &=U_m\Sigma_mV^T\\
        &=Y.
    \end{align*}
    Therefore $\min_W\|Y-H^\star W\|_F^2=0$.
\end{proof}

\subsection{Proof of Theorem~\ref{thm:kni-budget}}
\begin{restatedresult}{Theorem}{thm:kni-budget}
{Abstraction of task-irrelevant information}
Under the coverage assumption, fixed-norm encoder outputs, and the target-diagonal rescaling in Equation~\ref{eq:rescaling}, if each trajectory's target matrix $K^\alpha$ has
strictly positive diagonal entries almost surely and
$\mathbb E\,[\mathcal L_{\mathrm{spec}}]=0$, then for almost every clip and every frame $i$,
\begin{equation*}
    f_\theta((s_i,n))=f_\theta((s_i,n'))
\end{equation*}
for almost every pair $(n,n')$ drawn independently from
$\nu(\cdot\mid s_i)$.
\end{restatedresult}

\begin{proof}[Proof of Theorem~\ref{thm:kni-budget}]
We will show that coverage allows independent nuisance draws
while preserving the target inner products. Centering then
forces their rescaled embeddings to sum to zero, so changing
one nuisance draw cannot change its embedding.

Since the loss is nonnegative,
$\mathbb E\,\mathcal L_{\mathrm{spec}}=0$ implies zero loss
almost surely. Lemma~\ref{lem:recovery-centered-gram} therefore
gives $S=K^\alpha$ almost surely. Fix a task-relevant trajectory
and teacher metadata $Z$ for which this equality holds with
conditional probability one and the coverage assumption \ref{assump:coverage} holds. Let $q$ be the
number of frames. For this fixed $Z$, define
\begin{equation*}
    b_i(n)=\sqrt{\frac{K^\alpha_{ii}}{d}}\,
           f_\theta((s_i,n)).
\end{equation*}
These are the rescaled embeddings used in the spectral loss.
Since the encoder outputs have norm $\sqrt d$, we have
$\|b_i(n)\|^2=K^\alpha_{ii}$. Now draw auxiliary nuisance values $\widehat n_1,\ldots,
\widehat n_q$ independently, with
$\widehat n_i\sim\nu(\cdot\mid s_i)$.
For each $i\neq j$, coverage implies
\begin{equation*}
    b_i(\widehat n_i)^\top b_j(\widehat n_j)
    =K^\alpha_{ij}
    \qquad\text{almost surely}.
\end{equation*}
If the set of pairs violating this equality had
positive probability under the independent draws, assumption (A1) would
assign it positive conditional probability in the trajectory, causing non-zero loss. For diagonal entries, we also have
$b_i(\widehat n_i)^\top b_i(\widehat n_i)=K^\alpha_{ii}$
by the definition of $b_i.$
Together with the off-diagonal equalities above, this gives
\[
    b_i(\widehat n_i)^\top b_j(\widehat n_j)
    =K^\alpha_{ij}
    \qquad\text{for all }i,j
    \quad\text{almost surely}.
\]
Using this equality together with the fact that $K^\alpha$ is centered,
\begin{equation*}
    \left\|\sum_{j=1}^q b_j(\widehat n_j)\right\|^2
    =\sum_{j,k=1}^q
      b_j(\widehat n_j)^\top b_k(\widehat n_k)
    =\sum_{j,k=1}^q K^\alpha_{jk}
    =\mathbf1^\top K^\alpha\mathbf1
    =0
    \qquad\text{almost surely}.
\end{equation*}
Fix a frame $i$ and replace only $\widehat n_i$ by an
independent draw $\widehat n_i'\sim\nu(\cdot\mid s_i)$.
The new tuple has the same distribution, so both sums below
vanish almost surely:
\begin{equation*}
    b_i(\widehat n_i)+\sum_{j\neq i}b_j(\widehat n_j)=0,
    \qquad
    b_i(\widehat n_i')+\sum_{j\neq i}b_j(\widehat n_j)=0.
\end{equation*}
Subtracting gives
$b_i(\widehat n_i)=b_i(\widehat n_i')$ almost surely.
Since $K^\alpha_{ii}>0$, the rescaling can be cancelled,
yielding
\begin{equation*}
    f_\theta((s_i,\widehat n_i))
    =f_\theta((s_i,\widehat n_i'))
    \qquad\text{almost surely}.
\end{equation*}
The two nuisance draws are independent with distribution
$\nu(\cdot\mid s_i)$, which is the claimed invariance.
The argument applies to every frame and almost every $Z$.
\end{proof}

\subsection{Proof of Theorem~\ref{prop:pop-target}}

We resume from the assumptions stated in Section~\ref{sec:successor}, from the population-level variant of the spectral alignment objective
\begin{equation}
  \mathcal L_{\mathrm{spec}}^{\mathrm{pop}}(h) \;=\; \mathbb E_{\tau, i,j} \Big[
  \big(s(x_i,x_j) - \gamma^{|i-j|}\big)^2\Big],
\end{equation}
with $\tau:=(x_0, \dots, x_{T-1}) \sim \mathcal{D}$ and $(i,j)\sim\mathcal U(\{0,\dots,T-1\}^2)$, and from the expected teacher similarity of the state pair
\begin{equation}
    R_T(x, x') = \mathbb E_{\tau, i, j} [\gamma^{|i-j|}|x_i=x, x_j=x']. 
\end{equation}

We can then apply a bias-variance decomposition and rewrite the loss as:
\begin{align}
  \mathcal{L}_\mathrm{spec}^{\mathrm{pop}}(h) & = \mathbb E_{\tau, i, j}\Big[(s(x_i, x_j)-\gamma^{|i-j|})^2\Big]\\
  & = \mathbb{E}_{\tau, i, j}\Big[\big((s(x_i, x_j) - R_T(x_i, x_j)\big) + \big(R_T(x_i, x_j) - \gamma^{|i-j|})\big)^2\Big]\\
  & = \mathbb{E}_{x, x'} \bigg[ \mathbb E_{(\tau, i,j)\sim p(\cdot\mid x_i=x,\, x_j=x')} \Big[\big( s(x_i, x_j)- R_T(x_i, x_j)\big)^2\Big] \\
  & + \mathbb E_{(\tau, i,j)\sim p(\cdot\mid x_i=x,\, x_j=x')}\Big[\big(R_T(x_i, x_j) - \gamma^{|i-j|}\big)^2\Big] \\
  & + 2 \mathbb E_{(\tau, i,j)\sim p(\cdot\mid x_i=x,\, x_j=x')}\Big[\big(s(x_i, x_j)- R_T(x_i, x_j)\big)\big(R_T(x_i, x_j) - \gamma^{|i-j|}\big)\Big] \bigg],
\end{align}
where $\mathbb{E}_{x,x'}$ is over pairs of observations sampled from the dataset.
The second term is not dependent on $h$ and the third term fades as $(s(x_i,x_j)-R_T(x_i, x_j))$ is constant upon conditioning on $(x_i=x, x_j=x')$, and $\mathbb{E}_{(\tau, i,j)\sim p(\cdot\mid x_i=x,\, x_j=x')}[R_T(x_i, x_j) - \gamma^{|i - j|}] = 0$; thus minimizing $\mathcal{L}_{\mathrm{spec}}^{\mathrm{pop}}(h)$ is equivalent to minimizing $\mathbb{E}_{\tau, i, j} \big[(s(x_i, x_j)- R_T(x_i, x_j))^2\big]$.
We will now study the target $R_T$ in the limit of long trajectories, and connect it to the successor measure $M^\beta$ of the behavioral policy $\pi_\beta$.

\begin{restatedresult}{Theorem}{prop:pop-target}{Population target}
Under the setup above, with a finite state space, an irreducible, aperiodic Markov chain $P$ induced by $\pi_\beta$ with stationary distribution $\mu$ and $x_0\sim\mu$, for all $x,x'\in\mathcal S$, as $T\to\infty$,
\begin{equation*}
    T\,R_T(x,x') \;\longrightarrow\;
    \frac{M^\beta(x,x')}{\mu(x')} + \frac{M^\beta(x',x)}{\mu(x)}
    - \frac{\mathbf 1[x=x']}{\mu(x)} .
\end{equation*}
Moreover, if the induced Markov chain is reversible, i.e., $\mu(x)P(x'|x)=\mu(x')P(x|x')$,
\begin{equation*}
    T\,R_T(x,x') \;\longrightarrow\;
    \frac{2M^\beta(x,x')}{\mu(x')} - \frac{\mathbf 1[x=x']}{\mu(x')} .
\end{equation*}
\end{restatedresult}

\paragraph{Proof of Theorem~\ref{prop:pop-target}.}
By definition, $T R_T(x,x') = \frac{N(x,x')}{D(x,x')}$ with
\begin{equation}
    N(x,x'):= T \;\mathbb E_{\tau, i, j}\big[\gamma^{|i-j|}\,\mathbf 1[x_i=x,x_j=x']\big], \quad D(x,x') := E_{\tau, i, j}\big[\mathbf 1[x_i=x,x_j=x']\big].
\end{equation}
We will study the numerator and the denominator separately.

\emph{Numerator.} 
Let us consider the index difference $k:=j-i$, which is distributed as $p(k)=(T-|k|)/T^2$ for $|k|\le T-1$, and denote the probability that $x_i=x$ and $x_j=x'$ as
\begin{equation*}
    q_k(x,x') := \Pr(x_i = x, x_j = x' \mid j-i=k) =
    \begin{cases}
        \mu(x)P^{k}(x,x'), & k\geq0,\\
        \mu(x')P^{|k|}(x',x), & k<0.
    \end{cases}
\end{equation*}

We then have
\begin{align}
    N(x,x') & = T\; \mathbb E_{\tau, i, j}\big[\gamma^{|i-j|}\,\mathbf 1[x_i=x,x_j=x']\big] \\
    & = T\; \mathbb{E}_{i,j}\big[ \gamma^{|i-j|} \Pr(x_i=x, x_j=x'|i,j)\big] \\
    & = T\; \mathbb{E}_{i,j} \big[\gamma^{|i-j|} q_{j-i}(x, x')\big] \\
    & \stackrel{k=j-i}{=}  T\;\mathbb{E}_{k}\big[ \gamma^{|k|} q_{k}(x, x')\big] \\
    & =  T\; \sum_{k=-T+1}^{T-1}p(k) \gamma^{|k|} q_k(x,x') \\
    & =  T\;\sum_{k=-T+1}^{-1}p(k) \gamma^{|k|} q_k(x,x') + T\sum_{k=0}^{T-1}p(k) \gamma^{|k|} q_k(x,x') \\
    & = T\; \sum_{k=-T+1}^{-1}\frac{T+k}{T^2} \gamma^{|k|} \mu(x')P^{|k|}(x', x)  +T \sum_{k=0}^{T-1}\frac{T-k}{T^2} \gamma^{k} \mu(x)P^{k}(x, x') \\
    & = \sum_{k=-T+1}^{-1}\frac{T+k}{T} \gamma^{|k|} \mu(x')P^{|k|}(x', x)  + \sum_{k=0}^{T-1}\frac{T-k}{T} \gamma^{k} \mu(x)P^{k}(x, x') \\
    & \stackrel{k'=-k}{=} \sum_{k'=1}^{T-1}\frac{T-k'}{T} \gamma^{k'} \mu(x')P^{k'}(x', x)  + \sum_{k=0}^{T-1}\frac{T-k}{T} \gamma^{k} \mu(x)P^{k}(x, x')\\
    & = \mu(x')f_T(x',x) + \mu(x)f_T(x,x') - \mu(x)\mathbf{1}[x=x'],
\end{align}

where $f_T(x,x')=\sum_{k=0}^{T-1}\frac{T-k}{T}\gamma^k P^k(x,x')$. Each summand is bounded by $\gamma^k$ and
$\sum_{k=0}^\infty \gamma^k < \infty$, so by dominated convergence
\begin{equation}
    f_T(x,x') \to \sum_{k=0}^\infty \gamma^kP^k(x,x')=M^\beta(x,x').
\end{equation}
Thus, we conclude that
\begin{equation}
    N(x,x') \to \mu(x')M^\beta(x',x) + \mu(x)M^\beta(x,x') - \mu(x)\mathbf{1}[x=x'].
\end{equation}

\emph{Denominator.} We can deal with the denominator in a similar fashion to the numerator.
\begin{align}
    D(x,x') & = \mathbb E_{\tau, i, j}\big[\mathbf 1[x_i=x,x_j=x']\big] \\
    & = \mathbb{E}_{i,j} q_{j-i}(x, x') \\
    & \stackrel{k=j-i}{=}  \;\mathbb{E}_{k} q_{k}(x, x') \\
    & =  \sum_{k=-T+1}^{-1}p(k) q_k(x,x') + \sum_{k=0}^{T-1}p(k) q_k(x,x') \\
    & = \sum_{k=-T+1}^{-1}\frac{T+k}{T^2} \mu(x')P^{|k|}(x', x)  + \sum_{k=0}^{T-1}\frac{T-k}{T^2} \mu(x)P^{k}(x, x') \\
    & \stackrel{k'=-k}{=} \sum_{k'=1}^{T-1}\frac{T-k'}{T^2} \mu(x')P^{k'}(x', x)  + \sum_{k=0}^{T-1}\frac{T-k}{T^2}  \mu(x)P^{k}(x, x')\\
    & = \mu(x')g_T(x',x) + \mu(x)g_T(x,x') - \frac{1}{T}\mu(x)\mathbf{1}[x=x'],
\end{align}
with $g_T(x,x'):=\sum_{k=0}^{T-1}\frac{T-k}{T^2}P^k(x,x')$.
We have that $0 \leq \frac{T-k}{T^2} \leq \frac{1}{T}$ summing to $\frac{T+1}{2T}\to\frac{1}{2}$; moreover, due to irreducibility and aperiodicity, $P^k(x,x')\to\mu(x')$. Hence, by the Toeplitz lemma, $g_T(x,x') \to \frac{\mu(x')}{2}$, and finally $D(x,x')\to \mu(x)\mu(x')$.

Dividing the limits of $N(x,x')$ and $D(x,x')$ concludes the proof.

This concludes our analysis: embedding similarity is encouraged to regress a target that contains a symmetrized likelihood ratio, holding the behavioral successor measure as its numerator.
\subsection{Additional result: kernel dominance}

We have established that a label kernel can make a particular target linearly recoverable from embeddings. We now ask when one kernel produces a representation that is at least as useful as another for every linear probe task. This provides a notion of superiority of kernels.

Recall that $K^\alpha $ is the centered, tempered target. At the optimum, the \textit{rescaled} embeddings satisfy $\bar H \bar H^\top=K^\alpha $. However, a downstream probe reads the encoder output $H^\star$, whose rows have fixed norm $\sqrt{d}$. We therefore first express the geometry of $H^\star$ directly in terms of $K^\alpha $. Since $K^\alpha $ is the Gram matrix of $\bar H$, its diagonal entry $K^\alpha _{ii}$ is the squared length of row $\bar h_i$. Dividing each inner product by the lengths of the two rows gives their cosine similarity. Define
\begin{equation*}
    K^\text{cos}_{ik}= \frac{K^\alpha _{ik}}{\sqrt{K^\alpha _{ii}}\sqrt{K^\alpha _{kk}}}.
\end{equation*}
Thus $K^\text{cos}$ records the pairwise similarities after removing the row lengths prescribed by $K^\alpha$. In particular, $K^\text{cos}_{ii}=1$.
Recall also that the loss rescales each encoder output according to $\bar h_i = \sqrt{K^\alpha _{ii}} \frac{h^\star_i}{\sqrt{d}}$. At a global minimizer, it follows that
\begin{equation*}
    K^\alpha_{ik}=\langle \bar h_i, \bar h_k \rangle = \frac{\sqrt{K^\alpha _{ii}}\sqrt{K^\alpha _{kk}}}{d} \langle h^\star_i, h^\star_k \rangle.
\end{equation*}
Consequently, $\frac{1}{d}H^\star(H^\star)^\top = K^\text{cos}$. This identifies $K^\text{cos}$ as the cosine Gram matrix of the encoder output.

A scalar linear probe predicts $y$ as $H^\star w$, a linear combination of the columns of $H^\star$. Thus the exactly recoverable targets form $\operatorname{col}(H^\star)$. Using the identity $\operatorname{col}(HH^\top)=\operatorname{col}(H)$ and the relation $H^\star(H^\star)^\top=dK^\text{cos}$, we obtain
\begin{equation*}
    \operatorname{col}(H^\star)=\operatorname{col}(K^\text{cos}).
\end{equation*}

For $j\in \{1,2\}$, let $K^\alpha _j$ be the centered, tempered target constructed from $K_j$, with positive diagonal entries and rank at most $d$. Let $H^\star_j$ be any corresponding global minimizer subject to $\norm{h_i}=\sqrt{d}$. We say that $K_1$ \textbf{dominates} $K_2$ for linear probing if
\begin{equation}
    \min_W \norm{Y-H^\star_1 W}^2_F \le \min_W \norm{Y-H^\star_2 W}^2_F
\end{equation}
for every vector-valued target $Y$. Dominance is strict if the inequality is strict for at least one target.

\begin{theorem}[Kernel dominance]
\label{thm:kernel-dominance}    Under the assumptions above, $K_1$ dominates $K_2$ if and only if
    \begin{equation}
        \label{eq:kernel-dominance}
        \operatorname{col}(K^\text{cos}_2) \subseteq \operatorname{col}(K^\text{cos}_1).
    \end{equation}
    Dominance is strict if and only if this inclusion is strict.
\end{theorem}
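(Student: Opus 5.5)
The argument reduces dominance to a comparison of orthogonal projectors. For any fixed matrix $H$, the optimal probing error is
\[
\min_W\|Y-HW\|_F^2=\|(I-P_H)Y\|_F^2 ,
\]
where $P_H$ denotes the orthogonal projector onto $\operatorname{col}(H)$. By the discussion preceding the theorem, every admissible global minimizer $H^\star_j$ satisfies $\operatorname{col}(H^\star_j)=\operatorname{col}(K^{\text{cos}}_j)$. This uses Theorem~\ref{thm:recovery}: the minimizers have the form $\operatorname{diag}(c)H_{\text{spec}}Q$, and the row-norm constraint fixes $\tfrac1d H^\star(H^\star)^\top=K^{\text{cos}}$. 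Writing $V_j:=\operatorname{col}(K^{\text{cos}}_j)$ and $P_j$ for its orthogonal projector, the probing error of $K_j$ on any target $Y$ is $\|(I-P_j)Y\|_F^2$. This quantity depends only on $V_j$, not on the particular minimizer, so dominance is well defined.

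\emph{Sufficiency.} Suppose $V_2\subseteq V_1$. For each column $y$ of $Y$, the residual $(I-P_1)y$ is the distance from $y$ to $V_1$. Since $V_1\supseteq V_2$, this distance is at most the distance from $y$ to $V_2$. Equivalently, $P_1P_2=P_2$, so $I-P_1=(I-P_1)(I-P_2)$, and $I-P_1$ is a contraction. Summing over the columns of $Y$ gives the dominance inequality.

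\emph{Necessity.} Suppose $V_2\not\subseteq V_1$. Pick $v\in V_2\setminus V_1$ and take the single-column target $Y=v$. Kernel $K_2$ has zero error on this target, while $K_1$ has error $\|(I-P_1)v\|^2>0$. Hence $K_1$ does not dominate $K_2$. The definition quantifies over every vector-valued target, so the $p=1$ target $Y=v$ is admissible.

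\emph{Strictness.} Assume $V_2\subseteq V_1$, so dominance already holds. If the inclusion is strict, choose a nonzero $v\in V_1\cap V_2^{\perp}$. Then $K_1$ has error $0$ on $Y=v$, while $K_2$ has error $\|v\|^2>0$, so dominance is strict. If instead $V_1=V_2$, the two errors coincide for every $Y$, so no strict inequality is possible.

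The only delicate point is the identification $\operatorname{col}(H^\star_j)=\operatorname{col}(K^{\text{cos}}_j)$ for \emph{every} constrained global minimizer, including the fact that the constrained minimum is $0$. I would handle this exactly as in the proof of Theorem~\ref{thm:linear-recovery}. Scaling the rows of $H_{\text{spec}}$ to norm $\sqrt d$ keeps the loss at zero, because of the rescaling in Equation~\ref{eq:rescaling}. Since the loss is nonnegative, every constrained minimizer must then attain zero loss, and Lemma~\ref{lem:recovery-centered-gram} gives $\bar H\bar H^\top=K^\alpha$. This uses the assumption that $\operatorname{rank}K^\alpha_j\le d$. From $\bar H\bar H^\top=K^\alpha$ the cosine Gram identity follows, and the column-space identity follows from $\operatorname{col}(HH^\top)=\operatorname{col}(H)$. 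Everything after that is elementary linear algebra.
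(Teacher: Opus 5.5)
Your proof is correct and follows essentially the same route as the paper: both reduce dominance to comparing $\operatorname{col}(H^\star_j)=\operatorname{col}(K^{\text{cos}}_j)$, and both use a single-column witness vector for necessity and for strictness. The only difference is in the sufficiency step, where you use orthogonal projectors ($I-P_1=(I-P_1)(I-P_2)$) while the paper writes $H^\star_2=H^\star_1T$ and reuses the optimal probe for $H^\star_2$; your remark that the probe error depends only on $V_j$, so dominance does not depend on which minimizer is chosen, makes explicit a point the paper leaves implicit.
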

\begin{proof}
    Suppose first that $\operatorname{col}(K^\text{cos}_2) \subseteq \operatorname{col}(K^\text{cos}_1).$ Since $\operatorname{col}(H^\star_j)=\operatorname{col}(K^\text{cos}_j)$, every column of $H^\star_2$ is a linear combination of the columns of $H^\star_1$. Hence there exists $T\in \mathbb{R}^{d\times d}$ such that $H^\star_2 = H^\star_1 T$. In particular, any prediction made from $H^\star_2$ can also be made from $H^\star_1$. To see this explicitly, let $W^\star_2$ minimize the probe error for a target $Y$. We have that
    \begin{align*}
        \min_W \norm{Y-H^\star_1W}^2_F &\le \norm{Y-H^\star_1TW^\star_2}^2_F\\
        &= \norm{Y - H^\star_2 W^\star_2}^2_F\\
        &= \min_W \norm{Y - H^\star_2 W}^2_F.
    \end{align*}
    Thus $K_1$ dominates $K_2$.

    Conversely, suppose $K_1$ dominates $K_2$. Take any scalar target $y \in \operatorname{col}(K^\text{cos}_2)$. This target is exactly recoverable from $H^\star_2$, so dominance implies
    \begin{equation*}
        0 \le \min_w\norm{y-H^\star_1w}^2 \le \min_w \norm{y-H^\star_2w}^2=0.
    \end{equation*}
    It is therefore also exactly recoverable from $H^\star_1$, giving $y \in \operatorname{col}(H^\star_1) = \operatorname{col}(K^\text{cos}_1)$. Since this holds for every $y \in \operatorname{col}(K^\text{cos}_2),$ the required inclusion follows. If the inclusion is strict, choose a target $y \in \operatorname{col}(K^\text{cos}_1) \setminus \operatorname{col}(K^\text{cos}_2)$. Its optimal probe error is zero under $K_1$ and positive under $K_2$, establishing strict dominance. If the column spaces coincide, their optimal probe errors agree for every target. Thus strict dominance requires strict inclusion.
\end{proof}

\end{document}